\documentclass{article}

    \PassOptionsToPackage{numbers, compress}{natbib}

    \usepackage[preprint]{ewrl_2026}

\usepackage[utf8]{inputenc} 
\usepackage[T1]{fontenc}    
\usepackage{hyperref}       
\usepackage{url}            
\usepackage{booktabs}       
\usepackage{amsfonts}       
\usepackage{nicefrac}       
\usepackage{microtype}      
\usepackage[table]{xcolor}  
\usepackage{wrapfig}
\usepackage{colortbl}
\definecolor{tablegray}{gray}{0.95}

\usepackage{amsmath}
\usepackage{dsfont, bbm, bbold}
\usepackage{amssymb}
\usepackage{algorithm}
\usepackage{algorithmic}
\usepackage{setspace}
\usepackage{subfig}
\usepackage{adjustbox}
\usepackage{enumitem}

\usepackage{tikz}
\usetikzlibrary{arrows,automata,positioning}
\usetikzlibrary{shapes.multipart}
\usetikzlibrary{decorations.markings}
\usetikzlibrary{decorations.pathreplacing}

\newcommand{\cA}{\mathcal{A}}

\newcommand{\cL}{\mathcal{L}}

\newcommand{\cP}{\mathcal{P}}
\newcommand{\cQ}{\mathcal{Q}}
\newcommand{\cR}{\mathcal{R}}
\newcommand{\cS}{\mathcal{S}}
\newcommand{\cT}{\mathcal{T}}

\newtheorem{theorem}{Theorem}[section]

\newcommand{\mbf}[1]{{\mathbf{#1}}}

\makeatletter
\renewcommand{\subsection}{\@startsection{subsection}{2}{\z@}%
  {10pt plus 2pt minus 2pt}
  {5pt plus 1pt}
  {\normalfont\large\bfseries}}
\makeatother

\usepackage{xparse}
\NewDocumentCommand{\expectation}{O{\big} m m}{%
  \mathbb{E}_{#2}{#1[ #3 #1]}%
}

\newcommand{\Zeta}{Z}

\renewcommand{\mathcal}{\mathscr}

\usepackage[mathscr]{eucal}

\title{The Terminal Representation in Reinforcement Learning}

\author{%
  Amir Esterhuysen \space\space\space\space\space\space Anders Jonsson  \\
  Dept. Information and Communication Technologies\\
  Universitat Pompeu Fabra\\
  Barcelona, Spain \\
  \texttt{\{amiryanirmichael.esterhuysen, anders.jonsson\}@upf.edu} \\
}

\begin{document}
\maketitle

\begin{abstract}
Representation learning is a powerful tool for spatio-temporal abstraction within reinforcement learning (RL). Two well established approaches are through the successor representation (SR) and the default representation (DR). The SR encodes states by the future trajectories they induce, capturing information flow decoupled from reward. The DR builds on this by weighting trajectories with reward, integrating credit-assignment structure into the representation. Eigenvectors of both representations have been used to support a range of downstream tasks---including option discovery, reward shaping, transfer learning, and exploration. We introduce a structurally distinct formulation: the \textit{terminal representation} (TR). The TR encodes reward-weighted trajectories similarly to the DR, but can be learned as a lower-dimensionality object, and can be used directly for the mentioned applications without eigenvector computations. Eigendecomposition also imposes the assumption of symmetric transition dynamics, which the TR can bypass. In this work we develop the theoretical foundations of the TR: its derivation, convergence of two learning algorithms, its use for zero-shot compositionality, and equivalences between alternative reward formulations. We further show the TR is embedded in the top DR eigenvector, allowing it to capture the same underlying knowledge without eigendecomposition. Additionally, we provide empirical evidence of the TR as a viable alternative to existing representations in subsidiary applications, while requiring less computational overhead to learn, store, and use.
\end{abstract}

\section{Introduction}
Learning features for mapping problems onto more tractable forms is a priority across machine learning \citep{bengio2013representation}. In Reinforcement Learning (RL), this is termed \textit{representation learning}, and typically involves transformation of the state space.

Previous work analyses two related state representations: the \textit{successor representation} (SR) \citep{dayan1993improving} and the \textit{default representation} (DR) \citep{piray2021linear}. In the SR, states are represented via the future visitation trajectories they induce, capturing underlying transition dynamics of the environment. The DR extends this idea to reward-weighted trajectories, and thus exploits knowledge of the credit-assignment structure.

We propose a novel approach, the \textit{terminal representation} (TR), related to but distinct from the DR.
The TR represents reward-weighted visitations of terminal states and has the following advantages:

\begin{itemize}
    \item \textbf{Compactness} - The TR is lower dimensionality than the SR and DR, aiding more efficient learning and storage.
    \item \textbf{No Additional Computation} - The DR/SR rely on eigendecomposition for downstream tasks like reward shaping and option discovery, while the TR can be used directly as is.
    \item \textbf{Handling Asymmetry} - The above eigenvector approach imposes that transition dynamics are symmetric across the state space, while the TR does not make such assumptions.
    \item \textbf{Terminal Knowledge} - The TR directly encodes information about the desirability and reachability of goal or terminal states in the environment, facilitating zero-shot transfer.
\end{itemize}
In this work, we formally derive the TR and analyse its structure to arrive at these benefits. We prove the convergence of two learning algorithms under appropriate conditions, illustrate how the TR can be used to achieve zero-shot compositionality, demonstrate structural overlap between the TR and the top eigenvector of the DR, and handle equivalence between varying reward formulations.

These theoretical properties are supplemented by empirical results, where the TR in its base form is competitive with the SR and DR across four areas: 1) \textit{option discovery}, 2) \textit{reward shaping}, 3) \textit{transfer learning}, and 4) \textit{count based exploration}. Through this empirical analysis we maintain an important thread: the TR achieves this comparable performance while requiring less computationally intensive methods.

\section{Background}
\textbf{Markov decision process:} A \textit{terminating} Markov decision process (MDP)
can be expressed as $\mathcal{M} =(\mathcal{S},\cT, \mathcal{A},\mathcal{P},\mathcal{R})$, where $\mathcal{S}$ and $\mathcal{T}$ are respectively finite, \textit{non-empty} sets of \textit{non-terminal} and \textit{terminal} states. The full set of states is denoted by $\mathcal{S}^+ = \mathcal{S} \cup \mathcal{T}$. We use $S=|\mathcal{S}|$ and $T=|\mathcal{T}|$ to denote the number of non-terminal and terminal states, respectively, and $S^+=|\mathcal{S}^+|$ to denote the total number of states. $\mathcal{A}$ is the finite action set. $\mathcal{P}: \mathcal{S} \times \mathcal{A} \to \Delta(\mathcal{S^+})$ represents the transition dynamics, while $\mathcal{R}:\mathcal{S^+} \to \mathbb{R}$ is a \textit{reward function}. We assume that $\cR$ is state-dependent and that $\cR(s)<0$ for all $s\in\cS$.

At each time-step $t$, the learning agent receives a state $s_t \in \cS$ and executes an action $a_t \in \cA$. The environment sends back a reward $r_t$ such that $\mathbb{E}[r_t] = \mathcal{R}(s_t)$, and moves the agent to a new state $s_{t+1} \sim \mathcal{P}(\cdot | s_t, a_t)$.

The agent aims to compute a stochastic \textit{policy} $\pi:\mathcal{S}\rightarrow\Delta(\mathcal{A})$ that maximizes its expected future reward. Each $\pi$ induces a \textit{transition kernel} $\cP^{\pi} : \cS\to\Delta(\cS^+)$ which consolidates all transition effects such that $\cP^{\pi}(s'\mid s) = \sum_{a\in\cA}\pi(a\mid s) \cdot \cP(s'\mid s, a)$. We adopt the convention that $\cP^{\pi}(s'\mid \tau)=0$ for all $s',\tau\in\cS^+\times\cT$, i.e.~terminal states do not induce any transition probability. Each policy is assumed to be \textit{proper}, i.e.~guaranteed to reach a terminal state. That is, each $\pi$ induces trajectories of the form $(s_1,a_1,\dots a_{N-1},s_N)$ such that $s_N\in\cT$, where $N$ is a discrete random variable representing episode length. In communicating MDPs, this is achieved by assigning non-zero probability to each action.

Another central quantity is the \textit{value function} of $\pi$, which we define in each non-terminal state $s \in \mathcal{S}$ as $v^{\pi}(s) = \mathbb{E}\big[\sum_{t=1}^{N-1} \mathcal{R}(s_t) + \mathcal{R}(s_N)\mid s_1 = s\big]$, where the expectation is over the stochastic choice of next state $s_{t+1} \sim \cP^{\pi}(\cdot\mid s_t)$ at each time $t$, and the time $N$ it takes for the episode to end. Since $\mathcal{R}(s) < 0$ for each $s \in \mathcal{S}$, the value $v^{\pi}(s)$ has a well-defined upper bound. We extend the value function to each terminal state $\tau \in \cT$ by defining $v(\tau) \equiv \cR(\tau)$. Related is the action-value function $q^\pi(s,a) = \mathbb{E}_{\pi}[\sum_{t=0}^{\infty} \mathcal{R}(s_t)\mid s_0=s,a_0=a]$. The optimal policy is $\pi^* = \arg\max_\pi v^\pi$, and its corresponding optimal value function satisfies the Bellman optimality equations $v^*(s) = \mathcal{R}(s) + \max_{a \in \mathcal{A}} \sum_{s' \in \mathcal{S}^+} \mathcal{P}(s' | s, a) v^*(s')$ for all $s\in\cS$.

\textbf{Linearly-solvable Markov decision process:} We consider a related MDP formulation, with properties that aid tractability. A \textit{linearly-solvable Markov decision process} (LMDP)~\citep{TodorovNIPS2007} can be defined as $\mathcal{L} = (\mathcal{S}, \mathcal{T}, \cA, \mathcal{P}, \mathcal{R}, \mu)$, i.e.~an MDP augmented with a \textit{default policy} $\mu: \cS \to \Delta(\cA)$, representing the uncontrolled way of acting in this environment. All other definitions are inherited from the MDP setting. We additionally assume that any transition kernel $\cQ : \cS\to\Delta(\cS^+)$ defined for each $s,s'\in\cS\times\cS^+$ as $\cQ(s'\mid s)=w(s')\cP^\mu(s'\mid s)/Z$ is \textit{realizable}, i.e.~there exists a policy $\pi$ such that $\cP^\pi=\cQ$. Here, $w(s')>0$ is an arbitrary weight of state $s'$ and $Z$ is a normalization term. For example, this condition is satisfied for deterministic MDPs.

\begingroup
\renewcommand{\thefootnote}{}
\footnotetext{This work was supported by MLDR (Chist-ERA WAI 2022) PCI2023-145958- 2 (MCIU/AEI/10.13039).}
\addtocounter{footnote}{-1}
\endgroup

The immediate reward of an LMDP is $\mathcal{R}(s_t,\pi)= \mathcal{R}(s_t) - \lambda\cdot\mathrm{KL}(\cP^{\pi}(\cdot|s_t)\Vert\, \cP^{\mu}(\cdot|s_t))$. Hence, the agent can choose $\pi(\cdot\mid s_t)$ freely, but is penalized by the KL-divergence between the induced kernel $\cP^{\pi}(\cdot\mid s_t)$ and the default kernel $\cP^{\mu}(\cdot\mid s_t)$. The strength of this penalty is regulated by the temperature parameter $\lambda$.

Expressing our optimal value function as $v$, the Bellman optimality equations can be written as
\begin{align*}
& \frac 1 \lambda v(s) = \frac 1 \lambda \max_\pi \left[ \cR(s,\pi) + \mathbb{E}_{s'\sim\cP^{\pi}(\cdot|s)} v(s') \right] \\
 &= \frac{1}{\lambda} \cR(s) + \max_\pi \mathbb{E}_{s'\sim\cP^{\pi}(\cdot|s)} \left[ \frac 1 \lambda v(s') - \log \frac {\cP^{\pi}(s'|s)} {\cP^{\mu}(s'|s)} \right] \;\; \forall s.
\end{align*}
We introduce the notation $z(s)=e^{v(s)/\lambda}$ for each $s\in\cS^+$, and will often refer to $z(s)$ simply as the value of $s$. In this format, we can resolve the maximization operation analytically to achieve
\begin{equation}\label{eq:z}
z(s) = e^{\cR(s)/\lambda} \sum_{s'}\cP^{\mu}(s'|s)z(s').
\end{equation}
The optimal control is given by a transition kernel $\cP^*$ defined for each $s,s'\in\cS\times\cS^+$ as
\begin{align*}
\cP^*(s'\mid s) = \frac {\cP(s'\mid s)z(s')} {\sum_{s''} \cP(s''\mid s)z(s'')}.
\end{align*}
By assumption, there exists a policy $\pi$ such that $\cP^\pi=\cP^*$. In practice, the learner directly maintains an estimate of $\cP^*$ from which it samples the next state, and hence never needs to compute $\pi$.

We define the following vectors and matrices. Let $R$ be an $S^+\times S^+$ diagonal matrix with elements $R(s,s)=e^{\cR(s)/\lambda}$, and let $P$ be an $S^+\times S^+$ matrix with elements $P(s,s')=\cP^{\mu}(s'\mid s)$. $P_S$ is the $S\times S$ restriction of $P$ that considers only transitions between non-terminal states, while $P_T$ is the $S\times T$ restriction that considers transitions from non-terminal to terminal states. We do the same for the reward matrix $R$, leading to the diagonal matrices $R_S\in\mathbb{R}^{S\times S}$ and $R_T\in\mathbb{R}^{T\times T}$. In addition, we let $z$ be an $S$-dimensional vector containing the exponentiated values of non-terminal states, and $y$ be a $T$-dimensional vector with elements $y(\tau)=e^{\cR(\tau)/\lambda}$.

\textbf{Successor representation:} The successor representation (SR) \citep{dayan1993improving} encodes a state as the expected number of visits to its successor states, under a given policy $\pi$. This is built on the intuition that similar states lead to similar outcomes. The undiscounted SR, $\mathbf{\Psi}^\pi \in \mathbb{R}^{S^+ \times S^+}$, is defined as
\begin{align}
\mbf{\Psi}^\pi
&= \sum_{t=0}^\infty (\mbf{P}^\pi)^t = \left(\mbf{I} - \mbf{P}^{\pi}\right)^{-1}.
\end{align}
This quantity is well defined in the undiscounted setting under the previous assumptions of our framework: policies are proper and terminal states do not induce transition probability. The SR can be learned online using temporal difference (TD) methods \citep{dayan1993improving}. Once obtained, the SR for a policy $\pi$ can be used to express the value function as $v^{\pi}(s)=\sum_{s'\in\cS^+} \mbf \Psi^\pi (s,s')\cR(s')$.

\textbf{Default representation:} The SR is defined without use of the reward function, isolating the policy-dependent transition dynamics. We move now to a related concept, the default representation (DR), defined in the LMDP setting \citep{piray2021linear,tse2025reward}. The DR is 'reward-aware' in that it encodes states by the reward-weighted trajectories they induce.

Denoted $\mathbf{\Zeta}\in\mathbb{R}^{S^+\times S^+}$, the DR can be computed in closed form by
\begin{align}\label{eq:DR_definition}
    \mbf \Zeta &= (R^{-1} - P )^{-1} = (I-RP)^{-1}R.
\end{align}
Given knowledge of transition dynamics, the DR can be used to compute the optimal value function for any configuration of terminal rewards \citep{piray2021linear, tse2025reward}. We have $z = \mbf \Zeta_{S} P_{T}y$ where $\mbf\Zeta_{S}=( R_S^{-1} - P_S )^{-1}$ is the $S\times S$ restriction of $\mbf\Zeta$. The DR can be learned either using a dynamic programming (DP) formulation (in the case of known dynamics) or through TD learning.

\section{Related work}
The terminal representation is related to the SR \citep{dayan1993improving} and the DR \citep{piray2021linear,tse2025reward}, both mechanically and semantically. We explore these connections throughout, but essentially the DR extends the SR's transition encodings with reward knowledge and the TR augments the DR with terminal state knowledge. The thread of value function composition in LMDPs that we study can be linked to the work of \citet{todorov2009compositionality}, but our direct application of the TR in this vein is related to \citet{infante2022globally} who apply compositionality in a hierarchical setting. Compositionality has overlap with the idea of transfer learning, and our investigation here can be compared to the \textit{successor features} approach \citep{barreto2017successor}. Indeed, the TR rows can be viewed as feature vectors isolating the impact of terminal states. The use of representations that carry underlying information flow in MDPs is influenced by \textit{proto value function} (PVF) theory \citep{mahadevan2007proto}, and subsequently to the idea of \textit{eigenoptions} which highlight this information flow in the context of temporally extended behaviour \citep{machado2018eigenoption, machado2023temporal}.

\section{Terminal representation for LMDPs}

This section forms the core of our work. We derive the terminal representation within the LMDP framework, and discuss how it can be learned recursively---through both a dynamic programming approach under the assumption of known dynamics, and in purely sample-based fashion.

We can decompose the Bellman optimality equation into two terms corresponding to
the non-terminal and terminal states, respectively:
\[
  z(s)
  = e^{\cR(s)/\lambda}\sum_{s'\in \cS}\cP^{\mu}(s'\mid s)z(s')
    + e^{\cR(s)/\lambda}\sum_{\tau\in \cT}\cP^{\mu}(\tau\mid s)e^{\cR(\tau)/\lambda},
  \quad \forall s\in \cS.
\]

This leads to matrix equations in the form
\[
  z = R_SP_Sz + R_SP_Ty
  \iff
  R_S^{-1}z = P_Sz + P_Ty
  \iff
  z = (R_S^{-1}-P_S)^{-1}P_Ty.
\]

Jointly considering the influence of transitions and rewards, we let $D_S=R_SP_S$ be the $S\times S$ non-terminal dynamics matrix, and $D_T=R_SP_T$ be the $S\times T$ terminal equivalent.

Then, the matrix $M = (R_S^{-1}-P_S)^{-1} P_T = (I_S-R_SP_S)^{-1} R_SP_T = (I_S-D_S)^{-1} D_T$

is the terminal representation (TR) of the LMDP $\cL$. Given any configuration of terminal rewards, we can use the TR to zero-shot recover the optimal $z$-values across all non-terminal states: $z = My$.

\textbf{Learning the terminal representation:} We can reformat the expression for $M$ as a recursive dynamic programming update rule:

$M = (I_S-D_S)^{-1}D_T = \left( \sum_{i=0}^{\infty} (D_S)^i \right) D_T = D_T + D_S M.$

\begin{theorem}\label{dp-conv}
    Let $ M_0 = D_T$. The update rule
    \begin{align}
        M_{k + 1} = D_T + D_S M_k
    \end{align}
    converges to the TR: $\lim_{k \to \infty} M_{k} = M$.
\end{theorem}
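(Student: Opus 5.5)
Unrolling the recursion shows that $M_k=\sum_{i=0}^{k}(D_S)^i D_T$ for every $k\ge 0$. This holds for $k=0$ because $M_0=D_T$, and the inductive step is
\[
D_T + D_S\sum_{i=0}^{k}(D_S)^iD_T=\sum_{i=0}^{k+1}(D_S)^iD_T .
\]
Moreover, $M$ satisfies the same fixed-point equation $M=D_T+D_SM$, so the error $E_k=M_k-M$ obeys $E_{k+1}=D_SE_k$. Hence $E_k=(D_S)^kE_0$ with $E_0=D_T-M=-D_SM$. The theorem therefore reduces to showing $(D_S)^k\to 0$, that is, the spectral radius $\rho(D_S)=\rho(R_SP_S)$ is strictly less than one. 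This also justifies the Neumann series and the invertibility of $I_S-D_S$ used in the definition of $M$.

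The key step is to bound $\rho(R_SP_S)$, which I would do by comparing with the default kernel. Every entry of $R_S$ satisfies $e^{\cR(s)/\lambda}<1$, because $\cR(s)<0$ on $\cS$ and $\lambda>0$. So $R_SP_S$ is entrywise nonnegative and entrywise dominated by $P_S$. Perron--Frobenius monotonicity then gives $\rho(R_SP_S)\le\rho(P_S)$.

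For $P_S$ I use properness of $\mu$. Since $\mu$ is proper, $(P_S)^n(s,\cdot)$, the probability of still being in a non-terminal state after $n$ steps from $s$, tends to zero. Equivalently, because $\cS$ is finite, there is an $n$ with $\|(P_S)^n\|_\infty<1$, and so $\rho(P_S)<1$.

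A simpler alternative avoids properness altogether. Let $c=\max_{s\in\cS}e^{\cR(s)/\lambda}<1$, which exists because $\cS$ is finite. Each row of $P_S$ is substochastic, so $\|D_S\|_\infty\le c\,\|P_S\|_\infty\le c<1$. Then $D_S$ is a contraction in the $\infty$-norm, and $\|M_k-M\|_\infty\le c^k\|D_T-M\|_\infty\to 0$ at a geometric rate. I would present this as the main argument, since it also gives the rate, and mention properness only as the reason the undiscounted quantities are well defined.

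The main obstacle is the step where the spectral radius is strictly below one. It needs the strict negativity of $\cR$ on $\cS$ together with the finiteness of $\cS$, which yield the uniform constant $c<1$. Once $\|D_S\|_\infty<1$ is established, convergence follows immediately from the error recursion.
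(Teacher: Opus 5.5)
Your proposal is correct and matches the paper's proof. The paper also unrolls the recursion to $M_k=\sum_{i=0}^{k}D_S^iD_T$ and concludes from $\|D_S\|_\infty=\max_{s\in\cS}e^{\cR(s)/\lambda}\sum_{s'\in\cS}\cP^{\mu}(s'\mid s)<1$ that the Neumann series converges to $(I_S-D_S)^{-1}D_T=M$; your error recursion $E_{k+1}=D_SE_k$ adds the explicit geometric rate $c^k$, and your Perron--Frobenius/properness route is a valid but unneeded alternative.
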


\emph{Proof.} Found in Section \ref{dp-proof} of the appendix. $\blacksquare$

To handle transitions into terminal states in our vector updates, we define an extended mapping $M^+: \mathcal{S}^+\times\cT \to \mathbb{R}$ such that

$M^+(s,\tau_i) =
\begin{cases}
M(s,\tau_i) & \text{if } s \in \mathcal{S}, \\
\mathbb{1}_{s=\tau_i} & \text{if } s \in \mathcal{T}.
\end{cases}$

For row-wise operations, we adopt the convention that $M^+(s)=M^+(s,\cdot)\in\mathbb{R}^T$.

This leads to an online update rule for learning rows of $M$ from samples of the form $(s_t, a_t,r_t, s_{t+1})$, drawn under the default policy distribution $\cP^{\mu}$:
\[
\widehat M(s_t) \leftarrow (1-\alpha_t) \widehat M(s_t) + \alpha_te^{\frac{r_t}{\lambda}} \widehat M^+(s_{t+1}).
\]
If we assume knowledge of $\mathcal{P}^\mu$, we can replace the sampled next state with its exact expectation:
\[
\widehat M(s_t) \leftarrow (1-\alpha_t) \widehat M(s_t) +  \alpha_te^{\frac{r_t}{\lambda}} \sum_{s'\in\mathcal{S}^+} \mathcal{P}^{\mu}(s'\mid s_t)\widehat M^+(s').
\]
In the update rules, $\widehat M$ denotes the current estimate of $M$ and $\widehat M^+$ its extended domain equivalent.

\begin{theorem}\label{td-conv}
    Assume the learning rate $\alpha_t$ is such that $\sum_t^{\infty}\alpha_t =\infty$ and $\sum_t^{\infty}\alpha_t^2 <\infty$. The sample-based update rule
    \[
    \widehat M(s_t) \leftarrow (1-\alpha_t) \widehat M(s_t) + \alpha_t e^{\frac{r_t}{\lambda}} \widehat M^+(s_{t+1})
    \]
    will converge to $M$.
\end{theorem}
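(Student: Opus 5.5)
We cast the update as an asynchronous stochastic approximation of the form covered by the classical convergence theorem for contraction-based stochastic iterations (Jaakkola, Jordan and Singh; Tsitsiklis). This is the same route used for TD($0$) and $Q$-learning.

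\textbf{Target operator.} For an estimate $\widehat M\in\mathbb{R}^{S\times T}$, extended to $\widehat M^+$ by the fixed identity rows on $\cT$, define
\[
(\mathcal{H}\widehat M)(s) = e^{\cR(s)/\lambda}\sum_{s'\in\cS^+}\cP^{\mu}(s'\mid s)\widehat M^+(s'),
\]
that is, $\mathcal{H}\widehat M = D_T + D_S\widehat M$. By Theorem~\ref{dp-conv} and the identity $M = D_T + D_S M$, the TR $M$ is the fixed point of $\mathcal{H}$.

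We then rewrite the sampled update as
\[
\widehat M(s_t)\leftarrow (1-\alpha_t)\widehat M(s_t) + \alpha_t\big[(\mathcal{H}\widehat M)(s_t) + w_t\big],
\]
with noise
\[
w_t = e^{r_t/\lambda}\widehat M^+(s_{t+1}) - (\mathcal{H}\widehat M)(s_t).
\]
This decomposition needs two facts about the sampling. First, $s_{t+1}\sim\cP^\mu(\cdot\mid s_t)$. Second, $r_t$ must be independent of $s_{t+1}$ given $s_t$, and must satisfy $\mathbb{E}[e^{r_t/\lambda}\mid s_t]=e^{\cR(s_t)/\lambda}$. Since the background only gives $\mathbb{E}[r_t]=\cR(s_t)$, we will assume deterministic rewards here, or state explicitly that we read the multiplicative reward in this exponentiated sense. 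Under these conditions $\mathbb{E}[w_t\mid\mathcal{F}_t]=0$.

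\textbf{Noise variance.} Because $\cS^+$ is finite and rewards are bounded, we get
\[
\mathbb{E}[\|w_t\|^2\mid\mathcal{F}_t]\le C\big(1+\|\widehat M\|_\infty^2\big).
\]
We also take the step-size conditions to hold per state: every $s\in\cS$ is updated infinitely often, and $\alpha_t(s)$ satisfies the Robbins--Monro conditions stated in the theorem. Under the proper default policy, repeated episodes guarantee that every state is visited infinitely often.

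\textbf{Main obstacle: contraction of $\mathcal{H}$.} The convergence theorem requires $\mathcal{H}$ to be a contraction in some (weighted) max-norm. We have
\[
\|\mathcal{H}A-\mathcal{H}B\| = \|D_S(A-B)\|.
\]
The simplest route uses the sup-norm. Because $\cR(s)<0$, every diagonal entry of $R_S$ is strictly below $1$, and $P_S$ is substochastic. The row sums of $D_S$ are therefore at most $\gamma:=\max_s e^{\cR(s)/\lambda}<1$, so $\|D_S\|_\infty\le\gamma$ and $\mathcal{H}$ is a $\gamma$-contraction in $\|\cdot\|_\infty$, applied entrywise over the $T$ columns.

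If instead one wants to allow $\cR(s)\ge 0$ in some states, as Theorem~\ref{dp-conv} implicitly permits whenever $\rho(D_S)<1$, we would argue differently. Properness of $\mu$ gives $\rho(P_S)<1$, and since $D_S\le P_S$ entrywise we get $\rho(D_S)<1$. A positive vector $\xi$ with $D_S\xi\le\beta\xi$, $\beta<1$, for example $\xi=(I-P_S)^{-1}\mathbf{1}$, then yields a weighted max-norm in which $\mathcal{H}$ contracts. This is Bertsekas--Tsitsiklis' argument for stochastic shortest path.

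With contraction, martingale-difference noise and bounded conditional variance in place, the stochastic approximation theorem gives $\widehat M\to M$ almost surely. Finally, we note that the expected-update variant is the special case $w_t=0$.
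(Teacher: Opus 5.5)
Your proposal follows the paper's proof essentially step for step. It uses the same operator (the paper's $F$ is exactly yours), the same noise term $w_t$ to put the update in stochastic-approximation form, the same sup-norm contraction with modulus $\gamma=\max_{s\in\cS}e^{\cR(s)/\lambda}<1$, and the same appeal to Tsitsiklis's asynchronous stochastic approximation theorem. You are also more careful than the paper in flagging that $\mathbb{E}[r_t]=\cR(s_t)$ alone does not give $\mathbb{E}[e^{r_t/\lambda}\mid s_t]=e^{\cR(s_t)/\lambda}$, by Jensen, so zero-mean noise needs deterministic rewards or an exponentiated reading of the reward.

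One slip in your optional aside: if some $\cR(s)>0$, the entrywise bound $D_S\le P_S$ fails, so there you must assume $\rho(D_S)<1$ directly and can take $\xi=(I_S-D_S)^{-1}\mathbf{1}$. This does not affect the main argument under the paper's standing assumption $\cR(s)<0$.
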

\emph{Proof:} Found in Section \ref{td-proof} of the appendix. $\blacksquare$

\section{Equivalence under alternative reward formulations}

The LMDP framework was introduced with state-dependent rewards. It is often the case that MDPs and LMDPs are defined with rewards that depend either on the transition $(s,s')$ or the state-action pairing $(s,a)$, which may require a modified analysis. We deal with both these cases by showing 1) that every LMDP with transition-dependent reward can be mapped to one with state-dependent reward, and that 2) the TR can be extended to the case of state-action rewards.

\begin{theorem}\label{equiv}
    Consider an LMDP of the form $\widetilde{\cL}=\left( \cS,\cT,\cA,\cP,\widetilde{\cR},\widetilde{\mu}\right)$, with \textit{transition} dependent reward $\widetilde\cR:\cS\times\cS^+\to\mathbb{R}$. We can always transform $\widetilde \cL$ to an equivalent LMDP
\[
  \cL=\left( \cS,\cT,\cA,\cP,\cR,\mu\right)
\]
with state-dependent rewards $\cR:\cS^+\to\mathbb{R}$, such that $\cL$ and $\widetilde{\cL}$ have the same optimal value function.
\end{theorem}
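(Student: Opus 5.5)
Throughout, $\widetilde\cR(s,s')$ is the reward of moving from $s$ to $s'$, and $\widetilde z$ and $z$ are the exponentiated optimal values of $\widetilde\cL$ and $\cL$. The plan is to absorb the transition-dependent reward into the default dynamics, leaving a residual state-dependent part. First I will write the Bellman optimality equation of $\widetilde{\cL}$ in $z$-form. The immediate reward is now $\mathbb{E}_{s'\sim\cP^\pi}[\widetilde\cR(s,s')]-\lambda\,\mathrm{KL}(\cP^\pi(\cdot|s)\Vert\cP^{\widetilde\mu}(\cdot|s))$. The expected reward term combines with the log-ratio inside the expectation:
\[
\frac1\lambda\widetilde v(s)=\max_\pi \mathbb{E}_{s'\sim\cP^\pi(\cdot|s)}\left[\frac1\lambda\widetilde v(s') - \log\frac{\cP^\pi(s'|s)}{\cP^{\widetilde\mu}(s'|s)e^{\widetilde\cR(s,s')/\lambda}}\right].
\]
Define the unnormalized kernel $g(s'|s)=\cP^{\widetilde\mu}(s'|s)e^{\widetilde\cR(s,s')/\lambda}$ and its normalizer $G(s)=\sum_{s'}g(s'|s)$.

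The construction is as follows. Set $\cP^\mu(s'|s)=g(s'|s)/G(s)$ and $\cR(s)=\lambda\log G(s)$ for $s\in\cS$. For $\tau\in\cT$, set $\cR(\tau)$ to the terminal value of $\widetilde\cL$: $0$ if terminals carry no extra reward in $\widetilde\cL$, otherwise its terminal reward. Substituting $g=G\,\cP^\mu$ into the display splits off $\log G(s)=\cR(s)/\lambda$, leaving exactly the Bellman equation of $\cL$. Carrying out the maximization gives $\widetilde z(s)=\sum_{s'}g(s'|s)\widetilde z(s')=e^{\cR(s)/\lambda}\sum_{s'}\cP^\mu(s'|s)\widetilde z(s')$. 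This is precisely equation~\eqref{eq:z} for $\cL$ with the same terminal boundary values.

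Three things need checking. (i) $\cP^\mu$ must be realizable by some default policy $\mu$. It is a reweighting $w(s')\cP^{\widetilde\mu}(s'|s)/Z$ of the old default kernel, and the realizability assumption for $\widetilde\cL$ covers exactly such reweightings. The one subtlety is that the weight here, $e^{\widetilde\cR(s,s')/\lambda}$, depends on $s$ as well as $s'$. I would argue that realizability is applied separately at each state $s$, which handles this. The support of $\cP^\mu$ is unchanged, so properness of policies is preserved. (ii) Uniqueness: since all policies are proper, the linear system $z=R_SP_Sz+R_SP_Ty$ has a unique solution ($I_S-D_S$ is invertible, as used in Theorem~\ref{dp-conv}). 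Hence $z=\widetilde z$, and therefore $v=\widetilde v$. Both optimal controls are also $\propto\cP^{\mu}(s'|s)z(s')$, so they coincide. (iii) The paper assumes $\cR(s)<0$, but $\lambda\log G(s)$ need not be negative. I would either note that only finiteness of values, guaranteed by properness, is actually used, or assume $\widetilde\cR<0$. In the latter case $G(s)<1$, so $\cR(s)<0$ holds automatically.

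The main obstacle is step (i): realizing a state-dependent reweighting through a policy of the original MDP. I would first try to argue per state under the realizability assumption. As a fallback I would restrict to deterministic MDPs, where every kernel supported on reachable successors is realizable.
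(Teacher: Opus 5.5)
Your construction is the paper's: it sets $x(s,s')=\log\cP^{\widetilde\mu}(s'\mid s)+\widetilde\cR(s,s')/\lambda$, takes $\cR(s)=\lambda\log\sum_{s'}e^{x(s,s')}$ and $\cP^{\mu}$ as the normalized $e^{x(s,\cdot)}$, and gets $\cR(s)<0$ from $\widetilde\cR<0$, exactly as in your second option for (iii). The paper simply assumes $\cP^\mu$ is realizable, and your per-state argument is a valid justification of that. Keep that second option and drop the first. Properness does not by itself keep optimal values finite once $\cR(s)$ may be positive, since a reweighted kernel can stay arbitrarily long while collecting positive net reward per step. Your uniqueness step (ii), via Theorem~\ref{dp-conv}, also relies on $\cR(s)<0$.
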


\emph{Proof:}
Found in section \ref{equiv-proof} of the appendix. $\blacksquare$

Owing to the equivalence of $\cL$ and $\widetilde{\cL}$, it is sufficient to consider LMDPs with state-dependent rewards.

\begin{theorem}\label{sa-tr}
    Consider an LMDP of the form $\bar{\cL}=(\cS,\cT,\cA,\cP,\bar{\cR},\mu)$, where $\bar \cR :\cS^+\times\cA\to\mathbb{R}$. Additionally, we define the state-action exponentiated values $\bar z(s,a)=e^\frac{q(s,a)}{\lambda}$ for $s\in\cS$ and $\bar z(\tau,a)=e^\frac{\bar \cR(\tau,a)}{\lambda}$ for $\tau\in\cT$. Let $\bar z \in \mathbb{R}^{S\times A}$ be the vector of exponentiated non-terminal action-values and $\bar y \in \mathbb{R}^{T\times A}$ its terminal equivalent. Then, we can compute the state-action TR $\bar M$ under this formulation and use it to recover action-values through the relation $\bar z = \bar M\bar y$.
\end{theorem}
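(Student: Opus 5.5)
The plan is to mirror the derivation of $M$ in Section~4, with state--action pairs playing the role of states. First we need a Bellman optimality equation for $\bar z(s,a)$. Taking the KL-regularized objective with reward $\bar\cR(s,a)$ collected on executing $a$ in $s$, the action-value satisfies $q(s,a)=\bar\cR(s,a)+\sum_{s'}\cP(s'\mid s,a)\,v(s')$. Here $v(s')$ is the optimal state value at $s'$, which is itself an LMDP-type soft maximum over the next action.

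Resolving the maximization exactly as for equation~\eqref{eq:z} gives $e^{v(s')/\lambda}=\sum_{a'}\mu(a'\mid s')\,\bar z(s',a')$ for $s'\in\cS$. For terminal $s'=\tau$ we use the analogous expression with $\bar z(\tau,a')=e^{\bar\cR(\tau,a')/\lambda}$, so that terminal values are aggregated under $\mu$ as well. Exponentiating then yields, for every $(s,a)\in\cS\times\cA$,
\[
\bar z(s,a)=e^{\bar\cR(s,a)/\lambda}\sum_{s'\in\cS^+}\cP(s'\mid s,a)\sum_{a'\in\cA}\mu(a'\mid s')\,\bar z(s',a').
\]
This is the same linear structure as equation~\eqref{eq:z}, now on the augmented index set $\cS\times\cA$. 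The default kernel is $\bar P((s,a),(s',a'))=\cP(s'\mid s,a)\mu(a'\mid s')$, and the diagonal reward matrix is $\bar R$ with entries $e^{\bar\cR(s,a)/\lambda}$.

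Next, split the sum into non-terminal and terminal parts. Let $\bar P_S\in\mathbb{R}^{SA\times SA}$ collect the rows and columns with $s'\in\cS$, and $\bar P_T\in\mathbb{R}^{SA\times TA}$ the columns with $s'=\tau\in\cT$. Setting $\bar D_S=\bar R_S\bar P_S$ and $\bar D_T=\bar R_S\bar P_T$ gives $\bar z=\bar D_S\bar z+\bar D_T\bar y$. We then define $\bar M=(I-\bar D_S)^{-1}\bar D_T$, which gives $\bar z=\bar M\bar y$ exactly as in Section~4. The same Neumann-series argument also gives the recursion $\bar M=\bar D_T+\bar D_S\bar M$, so Theorems~\ref{dp-conv} and~\ref{td-conv} transfer verbatim to $\bar M$.

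The main obstacle is invertibility of $I-\bar D_S$, that is, showing $\rho(\bar D_S)<1$. We need $\bar\cR(s,a)<0$ for non-terminal $s$, the state--action analogue of the standing assumption, so that $\bar R_S$ has diagonal entries strictly below $1$. Together with properness of $\mu$, this makes $\bar P_S$ substochastic with $\bar P_S^n\to0$, because the augmented chain on $\cS\times\cA$ is absorbed exactly when the underlying state chain reaches $\cT$. Hence $\|\bar D_S^n\|_\infty\le\|\bar P_S^n\|_\infty\to0$, and the series $\sum_i\bar D_S^i$ converges. I would import this step from the proof of Theorem~\ref{dp-conv} rather than redo it.

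A secondary point to check is that the soft-max closed form is valid at the state--action level. This requires the realizability assumption to hold for reweightings of the action distribution $\mu(\cdot\mid s')$, which is immediate since any reweighted action distribution is itself a policy.
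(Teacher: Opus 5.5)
Your later steps (the split into $\bar P_S,\bar P_T$, the definition $\bar M=(I-\bar D_S)^{-1}\bar D_T$, and the invertibility check) match the paper's proof. The problem is the step that is supposed to produce the linear Bellman equation for $\bar z$: it does not hold. Start from $q(s,a)=\bar\cR(s,a)+\sum_{s'}\cP(s'\mid s,a)\,v(s')$ and exponentiate:
\[
\bar z(s,a)=e^{\bar\cR(s,a)/\lambda}\exp\Bigl(\sum_{s'\in\cS^+}\cP(s'\mid s,a)\,\frac{v(s')}{\lambda}\Bigr)=e^{\bar\cR(s,a)/\lambda}\prod_{s'\in\cS^+}\bigl(e^{v(s')/\lambda}\bigr)^{\cP(s'\mid s,a)}.
\]
This is a weighted \emph{geometric} mean of the quantities $e^{v(s')/\lambda}=\sum_{a'}\mu(a'\mid s')\bar z(s',a')$. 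It is not the arithmetic mean $\sum_{s'}\cP(s'\mid s,a)e^{v(s')/\lambda}$ that you wrote. By Jensen the former is at most the latter, with equality only when $v$ is constant on the support of $\cP(\cdot\mid s,a)$, e.g.\ for deterministic transitions. In the model you set up, the environment draws $s'$ and the controller only reweights $\mu(\cdot\mid s')$ under an action-level KL penalty. There, whenever $\cP(\cdot\mid s,a)$ is stochastic, the map $\bar y\mapsto\bar z$ is in general not linear, so no fixed $\bar M$ with $\bar z=\bar M\bar y$ exists.

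Linearity in an LMDP comes from the controller being allowed to reweight the \emph{entire} next-state distribution. That is what turns the maximization into a log-sum-exp over next states, as in equation~\eqref{eq:z}. Your closing remark is exactly where this goes wrong: reweighting $\mu(\cdot\mid s')$ alone is trivially realizable, but it is too weak to give the linear form.

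The paper's proof in effect treats state--action pairs as the states of an LMDP with default kernel $\bar\cP^{\mu}(s',a'\mid s,a)=\cP(s'\mid s,a)\mu(a'\mid s')$ and control over the joint next pair $(s',a')$. It then writes equation~\eqref{eq:z} directly on this augmented space. This implicitly uses realizability for reweightings of $\bar\cP^{\mu}(\cdot\mid s,a)$, including its $s'$-marginal, which holds automatically when $\cP(\cdot\mid s,a)$ is deterministic.

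To repair your proof, either define $q$ through that augmented LMDP and drop the derivation via $q=\bar\cR+\mathbb{E}_{s'}v$, or restrict to deterministic dynamics, where the two formulations coincide. As a minor point, $\bar\cR(s,a)<0$ alone already gives $\|\bar D_S\|_\infty<1$ exactly as in the proof of Theorem~\ref{dp-conv}. So properness of $\mu$ is not needed for invertibility.
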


\emph{Proof:} Found in Section \ref{sa-proof} of the appendix. $\blacksquare$

\section{Compositionality with the terminal representation}

An attractive attribute of LMDPs is the idea of \textit{compositionality} \citep{todorov2009compositionality}. Given value functions for a set of base LMDPs, we can automatically recover a value function for the overarching LMDP that represents their combination. We formalise this idea as follows:

Consider a set of LMDPs $\{\cL_1,\dots,\cL_T\}$, where each $\mathcal{L}_i$ is given by $(\mathcal{S}, \mathcal{T}, \cA, \mathcal{P}, \mathcal{R}_i, \mu)$. These LMDPs differ only in the reward at terminal states: for $i\ne j$ and $\tau\in\cT, \cR_i(\tau)\ne\cR_j(\tau)$, but for $s\in\cS, \cR_i(s)=\cR_j(s)$.

Now let there be a single LMDP $\cL=(\mathcal{S}, \mathcal{T}, \cA, \mathcal{P}, \mathcal{R}, \mu)$, with an associated set of weights $\{w_1,\dots,w_T\}$. Assume $z(\tau)=\sum_{i=1}^Tw_iz_i(\tau)$ for all $\tau\in\cT$. The Bellman equation is linear over $z$ for non-terminal states, and so $z(s)=\sum_{i=1}^Tw_iz_i(s)$ for all $s\in\cS$.

We will show that the TR can also be used to perform compositionality in this fashion, bypassing the need to learn the base LMDP value functions.

\begin{theorem}\label{tr-comp}
Assume the base LMDP reward functions follow the structure

$\mathcal{R}_i(\tau)=
\begin{cases}
0 & \text{if } \tau=\tau_i, \\
-\infty & \text{if }  \tau\ne\tau_i.
\end{cases}$

The base LMDP values can be used to perform composition by setting $w_i=z(\tau_i)$, such that $z(s) = \sum_{i=1}^T z(\tau_i)z_i(s)$ \citep{infante2022globally}.

Let $M$ be the TR matrix and $M^+:\mathcal{S}^+\times\mathcal{T}\to\mathbb{R}$ be its extended domain defined as:

$M^+(s,\tau_i) =
\begin{cases}
M(s,\tau_i) & \text{if } s \in \mathcal{S}, \\
\mathbb{1}_{s=\tau_i} & \text{if } s \in \mathcal{T}.
\end{cases}$

Under these conditions, the TR can be used to perform compositionality via the following relation:
$$z(s)=\sum_{i=1}^Tz(\tau_i)M^+(s,\tau_i).$$
\end{theorem}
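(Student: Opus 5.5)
The plan is to show that, under the stated base rewards, each base value function $z_i$ coincides with the $i$-th column of the extended TR $M^+$. Then the composition formula quoted from \citet{infante2022globally} becomes the claimed relation directly.

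\textbf{Step 1: base values at terminal states.} I start from the base LMDP $\cL_i$. By the convention $v(\tau)\equiv\cR(\tau)$ and $z=e^{v/\lambda}$, we get $z_i(\tau)=e^{\cR_i(\tau)/\lambda}$. With $\cR_i(\tau_i)=0$ and $\cR_i(\tau)=-\infty$ for $\tau\neq\tau_i$ (and $\lambda>0$), this gives $z_i(\tau)=\mathbb{1}_{\tau=\tau_i}$. So the terminal vector of $\cL_i$ is $y_i=e_i$, the $i$-th standard basis vector in $\mathbb{R}^T$. This already matches $M^+(\tau,\tau_i)=\mathbb{1}_{\tau=\tau_i}$ on $\cT$.

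\textbf{Step 2: base values at non-terminal states.} The base LMDPs share $\cS,\cT,\cP,\mu$ and the non-terminal rewards with $\cL$. Hence $R_S$, $P_S$, $P_T$, and therefore $D_S$, $D_T$ and the TR $M=(I_S-D_S)^{-1}D_T$, are identical across all $\cL_i$ and $\cL$. The TR depends only on non-terminal rewards, since terminal rewards enter solely through $y$.

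By the zero-shot identity $z=My$ from the TR section, applied to $\cL_i$, we get $z_i=My_i=Me_i=M(\cdot,\tau_i)$. Combining this with Step 1 gives $z_i(s)=M^+(s,\tau_i)$ for all $s\in\cS^+$.

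\textbf{Step 3: substitution.} Substituting into $z(s)=\sum_i z(\tau_i)z_i(s)$ yields the claim. As a sanity check, the same result follows directly: $z=My=\sum_i y(\tau_i)Me_i$ with $y(\tau_i)=z(\tau_i)$. The terminal case $s=\tau_j$ is trivial, since the sum collapses to $z(\tau_j)$.

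\textbf{Main obstacle.} The only delicate point is the $-\infty$ terminal reward. One must confirm that $z_i(\tau)=0$ is admissible in $z=My$, since $y$ is only required to be nonnegative and the derivation of $z=(R_S^{-1}-P_S)^{-1}P_Ty$ is linear and never divides by $y$. One must also confirm that every policy in $\cL_i$ stays well-defined. The plan is to interpret $-\infty$ as the limit $\cR_i(\tau)\to-\infty$ and invoke continuity of $z=My$ in $y$. It is additionally necessary that $M(s,\tau_i)>0$ for some reachable $s$, so that the optimal kernel is normalizable; I will note this as a mild reachability assumption if needed.
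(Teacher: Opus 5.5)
Your proof is correct and is essentially the paper's: both show $z_i(s)=M^+(s,\tau_i)$ on all of $\cS^+$ and then substitute into $z(s)=\sum_i z(\tau_i)z_i(s)$. The terminal entries come from $z_i(\tau)=e^{\cR_i(\tau)/\lambda}=\mathbb{1}_{\tau=\tau_i}$, and the non-terminal entries hold because the $\cL_i$ share $R_S,P_S,P_T$ and hence $M$. The paper proves the non-terminal identification by rewriting $M=D_T+D_SM$ in terms of $M^+$ and invoking uniqueness of the Bellman solution. You read it off the closed form $z_i=Me_i$, which rests on the same uniqueness (invertibility of $I_S-D_S$). Your direct check $z=My=\sum_i y(\tau_i)Me_i$ is a nice extra: it shows the relation holds without the composition result at all.
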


\emph{Proof:} Found in Section \ref{comp-proof} of the appendix. $\blacksquare$

This result allows us to perform compositionality using only the TR, which encodes knowledge of all constituent LMDPs and eliminates the need to obtain them independently.

\section{Terminal representation vs. default representation}\label{translation}

In this section, we present a more detailed comparison between the default representation and our proposed terminal representation.

\textbf{Dimensionality:} A straightforward advantage of the TR is compactness, allowing in practice for faster learning and more efficient storage compared to the other discussed representations.

Consider the TR expressed in the form $M = (R_S^{-1}-P_S)^{-1} P_T$. It is simple to verify that $M\in\mathbb{R}^{S\times T}$. In our setting---with $\cS,\cT\ne\emptyset$---it is necessary that $S,T< S^+$. Thus $M$ is guaranteed to be smaller than the $S^+\times S^+$ DR.

While not a requirement, often the number of terminal states $T$ is much smaller than $S$.
Thus we can also say that the TR tends to be lower-dimensional than the $S\times S$ non-terminal restriction of the DR that is used in applications like zero-shot value function recovery.

\textbf{Eigenvectors:} The DR (as well as the SR) has been used in applications like reward shaping and option discovery. This is done through manipulation of the top eigenvector. Typically, performing a full eigendecomposition is $O((S^+)^3)$ \citep{golub2013matrix}, while methods like power iteration that return only the top eigenvector can achieve complexity of $O((S^+)^2)$ \citep{saad2011numerical}. For reward shaping, the DR and its top eigenvector are computed offline under the assumption of known dynamics. The eigenvector is then queried at each Q-learning iteration to generate an augmented learning signal. For option discovery, the DR is learnt in parallel with the value function. At each iteration we 1) update the representation, 2) compute its top eigenvector, 3) use this to generate options which are added to the agent's action set, and 4) repeat the process with this newly extended action space. If $k$ iterations of the algorithm run, this leads to an eigenvector cost of at least $O(k\cdot(S^+)^2)$.

The TR can be used for the same applications. However, we gain a powerful advantage---the columns of the TR are directly extracted and used equivalently without additional computation. We formalise this relationship in the following theorem:

\begin{theorem}\label{dr-eigenvec}
Let $\mathbf{\Zeta}$ be the DR over $\mathcal{S}^+$ and $
\mbf \Zeta_S$ its restriction to non-terminal states. Let $\tau^* = \mathrm{argmax}_{\tau\in\mathcal{T}}\mathcal{R}(\tau)$, $\bf e_{\tau^*}$ be a one-hot vector of length $T$ that encodes $\tau^*$, and $M_{\cdot, \tau^*}$ be the corresponding column vector of the TR $M$. Suppose $e^{\mathcal{R}(\tau^*)/\lambda} > \rho(\mathbf{\Zeta}_S)$, where $\rho(\cdot)$ denotes spectral radius. Then the top eigenvector of $\mathbf{\Zeta}$ is
$$\mbf v^* = \begin{bmatrix} \sum_{k=0}^{\infty} e^{-k\frac{\mathcal{R}(\tau^*)}{\lambda}} \mathbf{\Zeta}_{S}^k M_{\cdot, \tau^*} \\ \bf e_{\tau^*} \end{bmatrix}$$
with corresponding eigenvalue $\beta^* = e^{\mathcal{R}(\tau^*)/\lambda}$.
\end{theorem}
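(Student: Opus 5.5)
The plan is to exploit the block-triangular structure of the DR that comes from the convention that terminal states induce no transitions. Order $\mathcal{S}^+$ with non-terminal states first. Then
$$P=\begin{bmatrix} P_S & P_T\\ 0 & 0\end{bmatrix},\qquad R=\begin{bmatrix} R_S & 0\\ 0 & R_T\end{bmatrix},$$
so that
$$R^{-1}-P=\begin{bmatrix} R_S^{-1}-P_S & -P_T\\ 0 & R_T^{-1}\end{bmatrix}.$$
This matrix is block upper triangular. Its top-left block is invertible, since its inverse is exactly $\mbf\Zeta_S$ as defined after \eqref{eq:DR_definition}. I would invert it blockwise and verify by multiplication that
$$\mbf\Zeta=\begin{bmatrix} \mbf\Zeta_S & \mbf\Zeta_S P_T R_T\\ 0 & R_T\end{bmatrix}=\begin{bmatrix} \mbf\Zeta_S & M R_T\\ 0 & R_T\end{bmatrix}.$$
The last equality uses the definition $M=(R_S^{-1}-P_S)^{-1}P_T=\mbf\Zeta_S P_T$ from the section on the TR. This identity is the key structural step: it makes the TR appear explicitly as the off-diagonal block of the DR.

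Next I would read off the spectrum. Because $\mbf\Zeta$ is block upper triangular, its eigenvalues are those of $\mbf\Zeta_S$ together with the diagonal entries $e^{\cR(\tau)/\lambda}$, $\tau\in\cT$, of $R_T$. By the hypothesis $e^{\cR(\tau^*)/\lambda}>\rho(\mbf\Zeta_S)$, every eigenvalue of $\mbf\Zeta_S$ has modulus strictly below $\beta^*=e^{\cR(\tau^*)/\lambda}$. Because $\tau^*$ maximises $\cR$ over $\cT$ and the exponential is monotone, $e^{\cR(\tau)/\lambda}\le\beta^*$ for every $\tau$. Hence $\beta^*$ is the eigenvalue of largest modulus, i.e.\ the top eigenvalue.

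To find the eigenvector, I would use the ansatz $\mbf v^*=[x;\,\mathbf e_{\tau^*}]$. The bottom block of $\mbf\Zeta\mbf v^*=\beta^*\mbf v^*$ reads $R_T\mathbf e_{\tau^*}=\beta^*\mathbf e_{\tau^*}$, which holds trivially. The top block gives
$$\mbf\Zeta_S x+M R_T\mathbf e_{\tau^*}=\beta^* x,$$
and since $R_T\mathbf e_{\tau^*}=\beta^*\mathbf e_{\tau^*}$, this becomes $(\beta^* I-\mbf\Zeta_S)x=\beta^* M_{\cdot,\tau^*}$. Dividing by $\beta^*$ yields $(I-\mbf\Zeta_S/\beta^*)x=M_{\cdot,\tau^*}$. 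Since $\rho(\mbf\Zeta_S/\beta^*)<1$, the Neumann series converges and
$$x=\sum_{k\ge0}(\beta^*)^{-k}\mbf\Zeta_S^k M_{\cdot,\tau^*},$$
which is exactly the claimed top block.

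The main points needing care are twofold. The first is the block inverse, namely checking that the $(1,2)$ block is $\mbf\Zeta_S P_T R_T$ and not $\mbf\Zeta_S P_T$; it is this block that produces the factor $\beta^*$ and makes it cancel. The second is the meaning of ``top'' when the argmax over $\cT$ is not unique. In that case $\beta^*$ has multiplicity greater than one, and the displayed vector is one valid top eigenvector, namely the one associated with the chosen $\tau^*$. I would remark on this rather than claim uniqueness. For a strict maximiser, $\beta^*$ is a simple eigenvalue, since it differs from all other diagonal entries and from the spectrum of $\mbf\Zeta_S$, so the eigenvector is unique up to scale.
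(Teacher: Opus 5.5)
Your proposal is correct and follows essentially the same route as the paper: you block-invert the upper-triangular $R^{-1}-P$ to get $\mbf\Zeta=\begin{bmatrix}\mbf\Zeta_S & MR_T\\ \mathbf 0 & R_T\end{bmatrix}$, split the eigen-equation into blocks, and solve the top block with the Neumann series under $\beta^*>\rho(\mbf\Zeta_S)$. Your remarks on ties in the argmax and on the simplicity of $\beta^*$ for a strict maximiser are more careful than the paper's argument; one small fix is that invertibility of $R_S^{-1}-P_S$ should be justified by $R_S^{-1}-P_S=R_S^{-1}(I_S-D_S)$ with $\|D_S\|_\infty<1$, not by appeal to the definition of $\mbf\Zeta_S$.
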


\emph{Proof:} Found in Section \ref{eigenvec-proof} of the appendix. $\blacksquare$

So, the TR appears as a central component inside the definition of the DR's top eigenvector. Using the TR columns directly captures the same underlying information, while avoiding any eigendecomposition. This reduces complexity per vector query from $O((S^+)^2)$ or worse to an $O(1)$ operation.

\textbf{Remark.} From observations across several grid-based environments with different terminal reward configurations, we see that in practice the top eigenvalue of the system does coincide with $e^{\mathcal{R}(\tau^*)/\lambda}$, supporting the assumption made above.

Further analysis (Appendix~\ref{eigenvec-proof}) shows that the non-terminal component $\mbf v^*_S$ of $\mbf v^*$ can be interpreted as a discounted reward-aware transition term over trajectories of varying lengths, weighted by transitions to the most desirable terminal state.

\textbf{Symmetry:} The underlying information structure within MDPs has been studied using the graph Laplacian $L = D - W$, where $W$ is a symmetric weight matrix encoding state adjacency and $D$ is the corresponding degree matrix \citep{mahadevan2007proto}. This includes the notion of \textit{proto-value functions} (PVF), basis functions that are derived from the eigenvectors of $L$. Eigendecomposition of the DR (or SR) exploits equivalence to the PVF eigenvectors through an identity of the form $P^\pi = D^{-1}W$, which
requires that the distribution induced by any policy $\pi$ used to compute the SR/DR is reversible: there must exist another distribution $\nu$ such that $\nu(s)\,\mathcal{P}^\pi(s'\mid s) = \nu(s')\, \mathcal{P}^\pi(s\mid s')$ for all $s, s' \in \mathcal{S}^+$.

Reversibility ensures $P^\pi$ is similar to a symmetric matrix, guaranteeing real eigenvalues for the DR. Without it, the DR's eigendecomposition may yield complex eigenvalues and non-orthogonal eigenvectors, undermining its use as a structural representation of the environment. This is a strong condition. Policies are typically directional, biasing the agent toward certain states and
away from others. Such policies are incompatible with the balanced flow of behaviour that reversibility demands.

The TR is constructed in a way that avoids this issue. It is of shape $S \times T$ and maps non-terminal to terminal states, making it inherently asymmetric and avoiding the need for reversibility. Its columns encode the same structural information that the DR's top eigenvector captures under reversibility (Theorem~\ref{dr-eigenvec}), but $M$ is well-defined and directly usable under arbitrary non-reversible dynamics.

\textbf{Explicit terminal knowledge:} The TR is defined such that terminal states and their reward/transition dynamics are segmented from non-terminal states. This relies on the assumption of 1) an environment with clearly demarcated terminal states and 2) some structural knowledge of these states. The DR, defined over the full state space, does not require these conditions to be satisfied.

This is a clear limitation of the TR, and restricts the environments where it can be applied. However, it is still important to acknowledge certain mitigating factors:
\begin{itemize}[itemsep=0pt, parsep=0pt]
    \item The presence of terminal or 'goal' states is a common feature in RL environments, and does not constitute a fringe setting.
    \item Given such states, encoding direct knowledge of their relative desirability and reachability helps facilitate efficient learning.
    \item Thus far we have been using the language of terminal states, but it is more accurate to say the TR requires a set of demarcated states, which do not necessarily have to induce termination.
\end{itemize}
The third point is worth emphasising. The TR theory can function with arbitrarily selected 'relevant' states---high or low reward regions, bottleneck states that lead from one segment of the environment to another, boundary states. Pivoting to this view does not reproduce the flexibility offered by the DR, but we can relax the need for traditionally defined goals. Future work should formalise this extension, along with methods that automatically discover important states during learning.

\section{Experiments}
We mirror the experimental setup used by \citet{tse2025reward}. Their work is focused on using the DR as an alternative to the SR, specifically analysing the impact of reward awareness. We compare the TR to both these approaches. Our narrative is not fixated on the TR as fundamentally superior across these settings, and in fact there is semantic similarity between the TR and the DR. Rather, we emphasise the competitive performance of the TR, while keeping in mind the structural advantages that have been discussed throughout.

\textbf{Option discovery:} Options are temporally extended actions made up of multiple 'primitive' actions \citep{sutton1999between}, defined by the tuple $o=(I_o,\pi_o,\beta_o)$. $I_o\subset\cS$ is the set of states where the option may be initiated, $\pi_o$ is the option policy followed by the agent, and $\beta_o:\cS\to[0,1]$ controls option termination.

The SR\citep{machado2023temporal} and DR\citep{tse2025reward} have been used to discover options for exploration. This is done through an iterative online algorithm: as the representation is learned, its top eigenvector generates options that are added to the action set. The SR variant is titled \textit{covering eigenoptions} (CEO), while the DR equivalent is \textit{reward-aware covering eigenoptions} (RACE). The TR can also be used in this cycle, but instead of computing the top eigenvector, TR columns are directly extracted. Four variants are considered: 1) use the column $M_{\cdot, \tau^*}$ of highest-reward terminal state, 2) use all columns, 3) randomly select a column, 4) cycle through the columns. We evaluate these variants against CEO, RACE, and a random walk baseline in two settings: When option discovery aids exploration during learning, and when option discovery is used to collect samples for offline Q-learning. We use the state-action TR formulation to perform Q-learning experiments (Theorem \ref{sa-tr}).

\noindent
\begin{minipage}{\linewidth}
  \centering
  \includegraphics[width=0.49\linewidth]{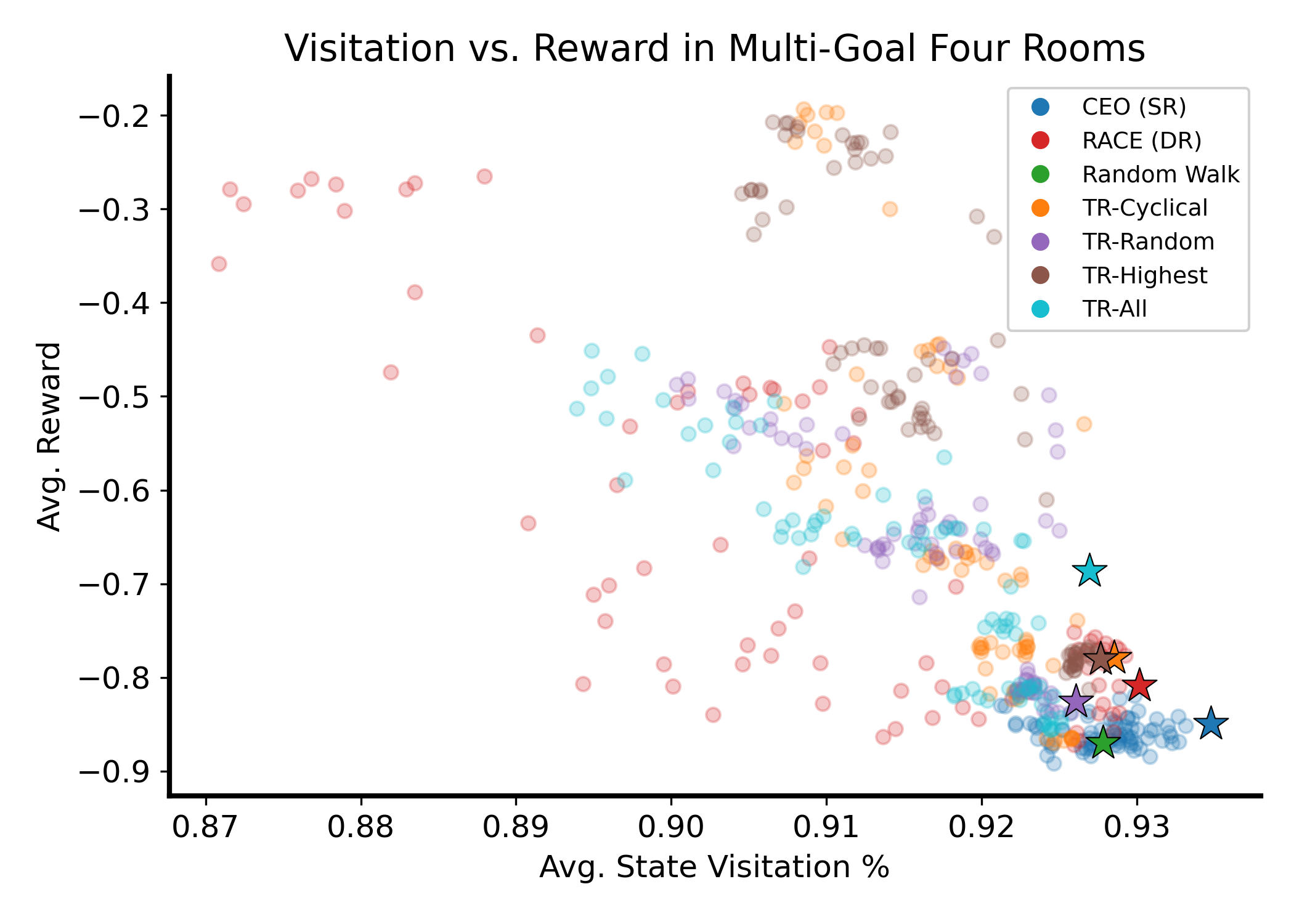}
  \hfill
  \includegraphics[width=0.49\linewidth]{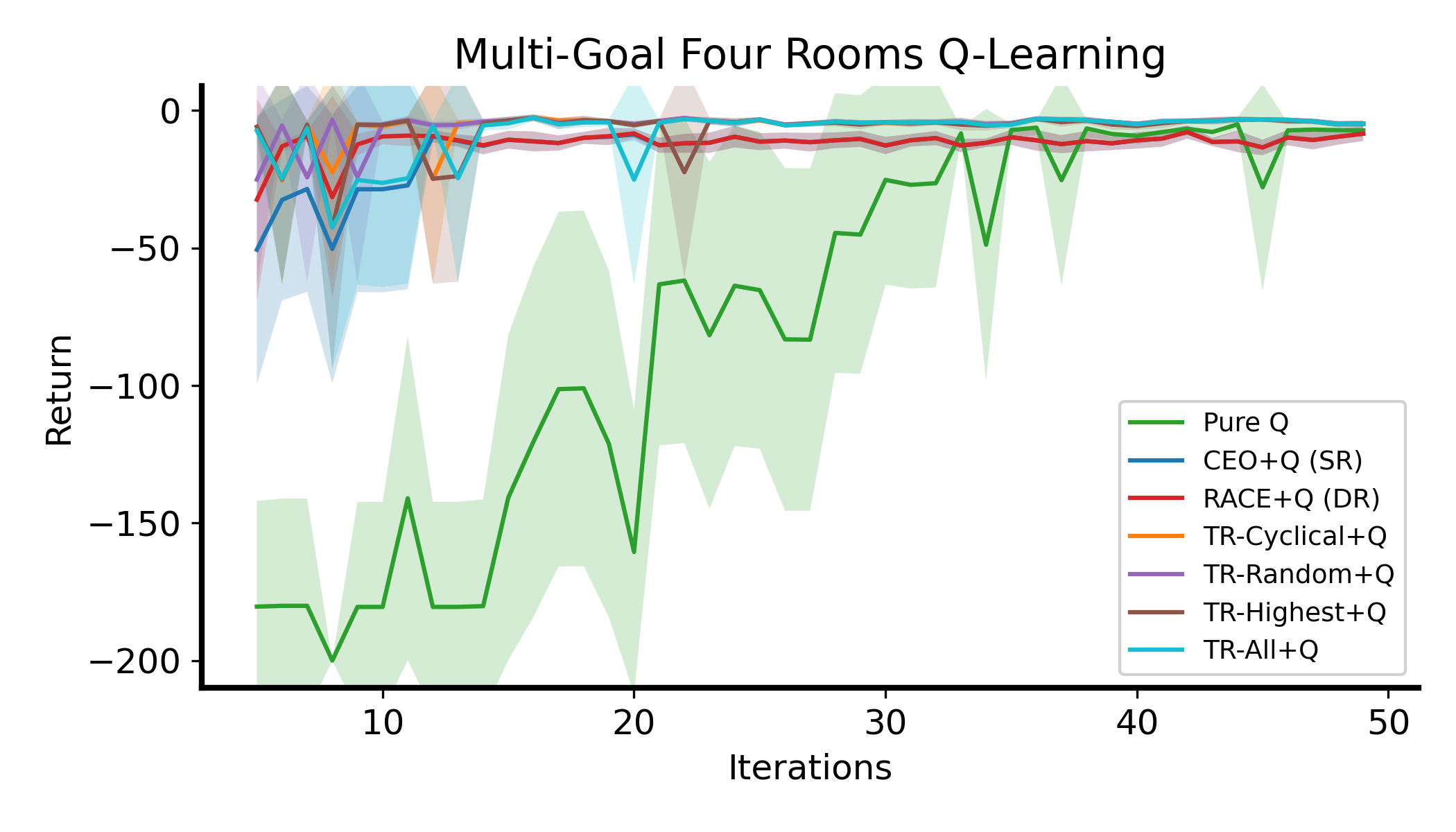}
  \captionof{figure}{
  \textbf{Left:} TR option discovery variants vs. baselines using the DR (RACE), the SR (CEO), and a random walk. Experiments conducted in Four Rooms environment with 4 differently weighted goals. 10 seeds tested.
  \textbf{Right:} Q-learning boosted by different representations: all TR variants, the DR (RACE+Q), the SR (CEO+Q).
  }
\end{minipage}

All TR variants produce competitive performance. Using all column vectors achieves higher average reward than the other representations. Reward-awareness of the DR leads to better returns and worse visitation than the SR \citep{tse2025reward}. The TR stretches this further, by pushing the agent directly to more desirable goals.

\textbf{Reward shaping:}
RL environments are often defined with \textit{sparse rewards}, where the agent only receives instructive feedback at the end of an episode. This can cause inefficient learning. \textit{Reward shaping} combats this, letting the agent augment its reward signal with internal knowledge \citep{ng1999policy}. If $\mbf e$ denotes the top eigenvector of either the SR or DR, $\hat r_t = - \big(\mbf e(s_\text{goal}) - \mbf e(s_{t+1}) \big)^2$ and $\hat r_t = \mbf e(s_{t+1}) - \mbf e(s_t)$ are two choices of shaping reward used in previous work. For any terminal state $\tau\in\cT$, we form the potential-based shaping reward $\hat r_t = M^+(\tau,\tau) - M^+(s_{t+1}, \tau) = 1 - M(s_{t+1}, \tau)$.

\begin{figure}[H]
  \centering
  \includegraphics[width=0.8\linewidth]{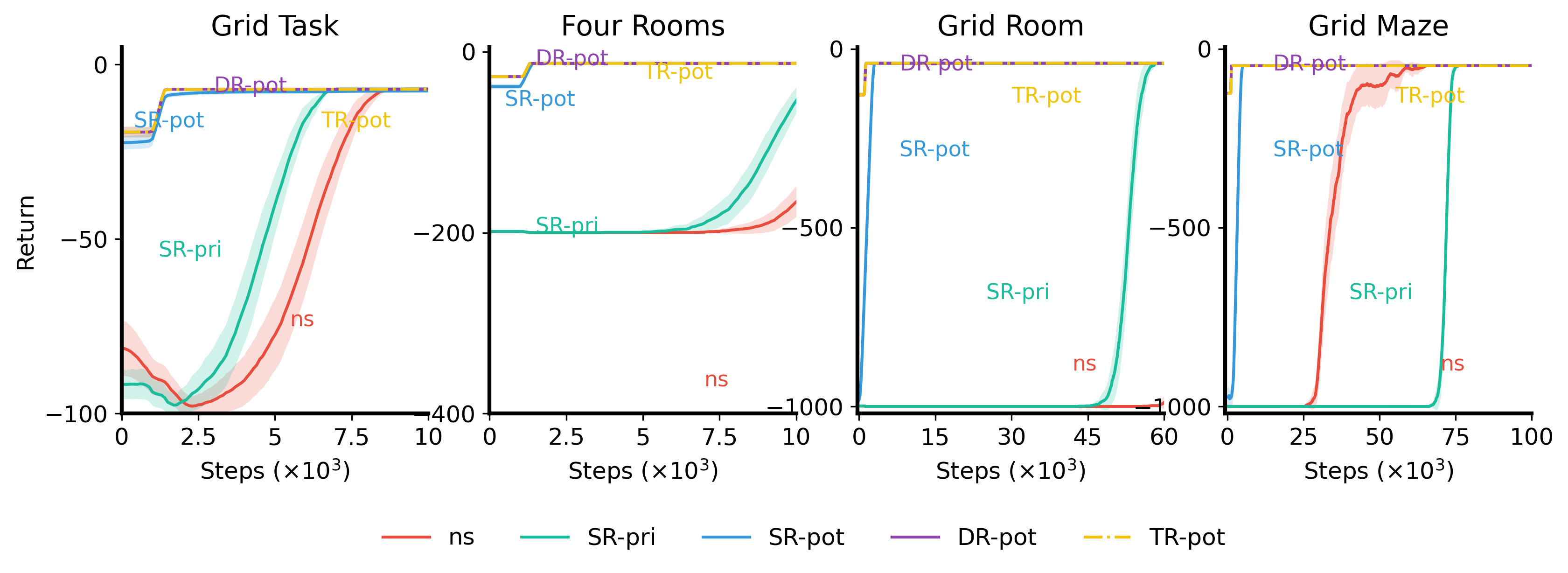}
  \caption{Reward shaping in the indicated environments, averaged over 50 seeds (TR in yellow).}
\end{figure}

Potential-based shaping with the TR achieves identical performance to the DR, successfully navigating through low reward regions---again with the caveat that this is done through direct column extraction. Both reward-aware methods outperform the SR in converging to optimal return.

\textbf{Count based exploration:} The SR captures state visitation counts, and thus its norm can be used as a bonus to promote exploration \citep{machado2020count}. The same principle is applied with the DR norm \citep{tse2025reward}, which has been shown to outperform the SR as an exploration bonus. We evaluate the TR norm under the same conditions---in the RiverSwim and SixArms exploration environments. Comparison is done between unmodified Sarsa, and Sarsa with each of the three norms used as an exploration bonus.

\noindent\begin{minipage}{\linewidth}
\centering
\small
\setlength{\tabcolsep}{6pt}
\rowcolors{2}{white}{tablegray}
\begin{tabular}{@{}lcccc@{}}
\toprule
\textbf{Environment} & \textbf{Sarsa} & \textbf{+SR} & \textbf{+DR} & \textbf{+TR} \\
\midrule
\textsc{RiverSwim} & 23 (0.1) & 98 (25.4) & 2,933 (22.2) & 2,906 (19.3)\\
\textsc{SixArms}   & 276 (9.4) & 678 (72.1) & 3,127 (214.2) & 2,443 (93.6)\\
\bottomrule
\end{tabular}
\captionof{table}{Returns in exploration tasks when using SARSA boosted by each representation as an exploration bonus. Values are $\times 10^3$. 95\% confidence intervals shown in parentheses.}
\label{table:count_based_results}
\end{minipage}

The TR achieves competitive performance with the DR, particularly in RiverSwim where results are near-identical. Both reward-aware representations significantly outperform the SR.

\textbf{Transfer:} The SR has been used for transfer learning in the case where the reward function is altered. This is done through the extension to \textit{successor features} (SF), which decouple environment dynamics and reward \citep{barreto2017successor}. SF is deployed alongside \textit{generalised policy improvement} to compute a policy for a new task that combines behaviour from previously learned policies.

\begin{figure}[H]
  \centering
  \includegraphics[width=0.45\linewidth]{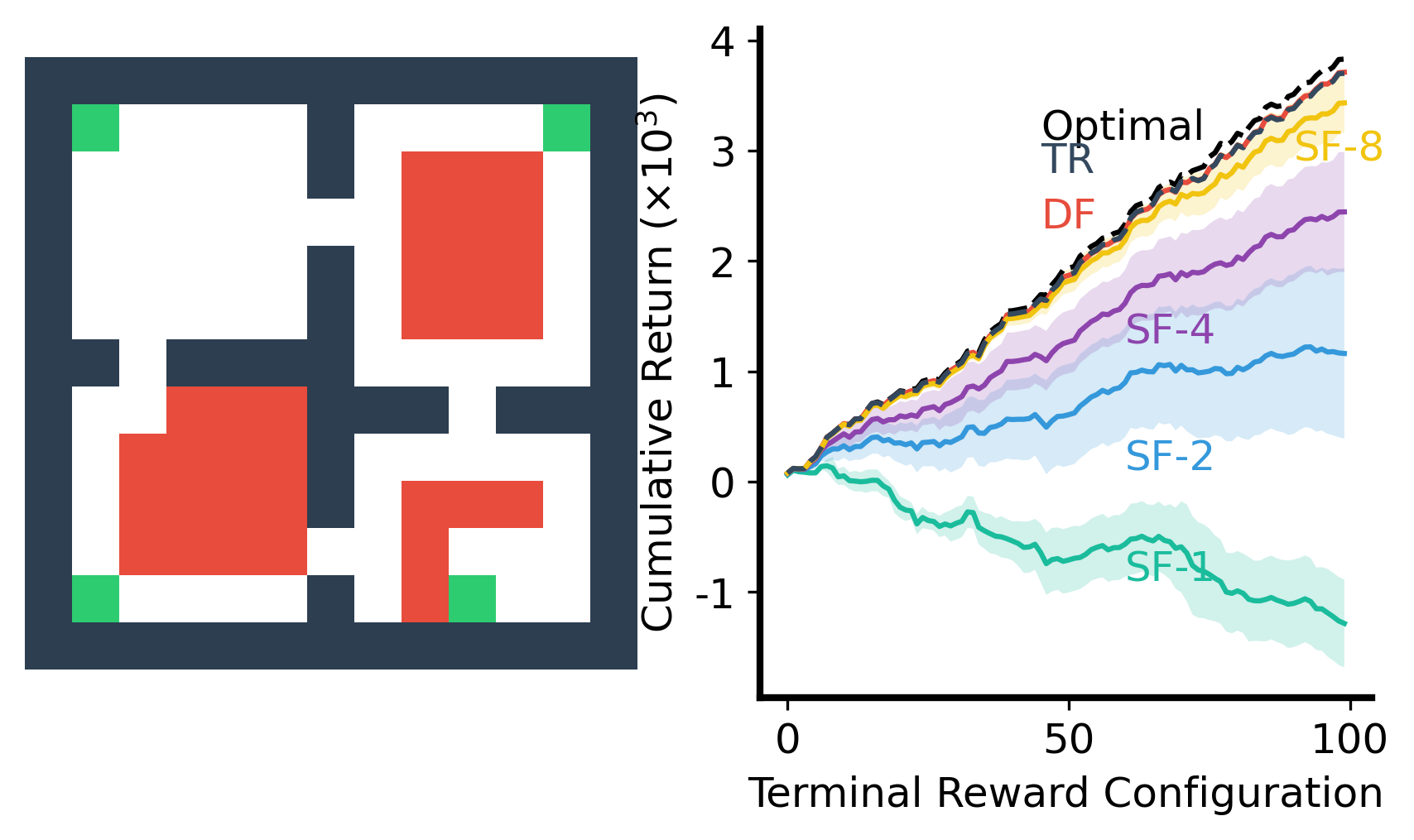}
  \caption{Transfer learning in the multi-goal Four Rooms environment, averaged over 10 seeds. The TR performs similarly to default features, while outperforming all iterations of successor features.}
\end{figure}

The policy-independent DR equivalent, \textit{default features} (DF), views non-terminal rewards as part of the environment dynamics \citep{tse2025reward}. The decoupling is done with respect to terminal rewards only, and features are learned to model the flow of knowledge from terminal states to non-terminal states. This echoes our intuition around the base TR itself, and so we use it directly to handle transfer scenarios where terminal rewards change between tasks.

The TR is compared as a tool for transfer with the DF and the SF (under a varying number of base policies). TR performance matches the DF, without the need for an explicit feature learning step, while outperforming all variations of the SF.

\section{Conclusion}
We propose the terminal representation as a novel method for RL state representation. The TR captures knowledge of the underlying information flow in terminating environments, weighted by the desirability of relevant terminal states. We prove convergence of both dynamic programming and sample-based learning algorithms, and show the TR can be used to: 1) zero-shot recover optimal value functions for arbitrary terminal reward specification, 2) perform potential-based reward shaping, 3) achieve intelligent reward-aware exploration during learning, and 4) discover extended macro-actions. Through all this, we note that the TR is more compact than the default representation and successor representation, while not relying on symmetric transition dynamics or requiring eigenvector computations---and in fact, the TR inherently encodes information found in the top DR eigenvector. We also provide a result on the equivalence between alternative reward structures in the LMDP setting, allowing our TR theory to translate cleanly between these formulations. Two natural directions for future work are extending to higher-dimensional environments through function approximation, and handling settings without explicitly defined terminal states.

\bibliographystyle{plainnat}
\bibliography{ewrl_2026}


\newpage

\appendix

\section{Theory}

In this appendix we prove several of the theorems stated in the main text. For clarity we restate the theorems here.

\renewcommand{\thetheorem}{4.1}
\subsection{Proof of Theorem \ref{dp-conv}}\label{dp-proof}

\begin{theorem}
    Let $ M_0 = D_T$. The update rule
    \begin{align}
        M_{k + 1} = D_T + D_S M_k
    \end{align}
    converges to the TR: $\lim_{k \to \infty} M_{k} = M$.
\end{theorem}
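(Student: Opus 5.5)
The plan is to unroll the recursion and identify the limit with the Neumann series for $(I_S-D_S)^{-1}$. By induction on $k$, starting from $M_0=D_T$ and applying $M_{k+1}=D_T+D_SM_k$, we get
\[
M_k=\sum_{i=0}^{k} D_S^{\,i} D_T .
\]
It therefore suffices to show that $\rho(D_S)<1$. Then the partial sums $\sum_{i=0}^k D_S^i$ converge to $(I_S-D_S)^{-1}$. Multiplying on the right by the fixed matrix $D_T$, which is a continuous linear map, gives $M_k\to (I_S-D_S)^{-1}D_T=M$. An equivalent route is to consider the error $E_k=M_k-M$. Using the fixed-point identity $M=D_T+D_SM$ stated just before the theorem, it satisfies $E_{k+1}=D_SE_k$, hence $E_k=D_S^{k}E_0$. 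This tends to zero exactly when $D_S^k\to 0$, i.e.\ when $\rho(D_S)<1$.

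The main step is bounding the spectral radius of $D_S=R_SP_S$. Since $\cR(s)<0$ for every $s\in\cS$, each diagonal entry of $R_S$ satisfies $e^{\cR(s)/\lambda}<1$ (with $\lambda>0$). Let $r_{\max}=\max_{s\in\cS}e^{\cR(s)/\lambda}<1$; this maximum is over the finite set $\cS$. The matrix $P_S$ is nonnegative with row sums $\sum_{s'\in\cS}\cP^\mu(s'\mid s)\le 1$. Hence $D_S$ is entrywise nonnegative and
\[
\|D_S\|_\infty=\max_{s\in\cS} e^{\cR(s)/\lambda}\sum_{s'\in\cS}\cP^\mu(s'\mid s)\le r_{\max}<1 .
\]
So $\rho(D_S)\le\|D_S\|_\infty<1$. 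This also gives a geometric rate $\|E_k\|_\infty\le r_{\max}^k\|E_0\|_\infty$.

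This step is the only delicate one, and it is immediate because the rewards are strictly negative. I would also remark that properness of $\mu$ independently gives $\rho(P_S)<1$, since $P_S^k\to0$ when termination is guaranteed. So the argument would survive even with zero non-terminal rewards. This is not needed under the paper's standing assumptions, and I would mention it only as a side comment.

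Finally, I would note that $\rho(D_S)<1$ also justifies the invertibility of $I_S-D_S$ and the series identity $M=\sum_{i\ge0}D_S^iD_T$ used in the main text. This confirms that the limit is the TR as defined there.
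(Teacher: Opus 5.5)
Your proposal is correct and follows essentially the same route as the paper: unroll the recursion to $M_k=\sum_{i=0}^{k}D_S^{\,i}D_T$, then show the Neumann series converges via $\|D_S\|_\infty\le\max_{s\in\cS}e^{\cR(s)/\lambda}<1$, using the strictly negative non-terminal rewards and the substochastic rows of $P_S$. Your error recursion $E_{k+1}=D_SE_k$ with its geometric rate, and your side remark that properness alone would give $\rho(D_S)\le\rho(P_S)<1$, are sound additions that the paper's argument does not include.
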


\emph{Proof.} We have $M_{k+1}=\Big[\sum_{i=0}^{k+1}D_S^i\Big]D_T$. Thus, $\lim_{k\to\infty}M_k = \Big[\sum_{i=0}^{\infty}D_S^i\Big]D_T$. The sum can be interpreted as a Neumann series, and we want to show this converges. $||D_S||_{\infty}=\max_{s\in\cS}e^{\frac{\cR(s)}{\lambda}}\sum_{s'\in\cS}\cP(s'\mid s)$. Since $\cR(s)<0$ for all $s\in\cS$, we have $e^{\frac{\cR(s)}{\lambda}}<1$. The probability sum is $\le 1$, since we are considering transitions from some non-terminal state to all other non-terminal states. Therefore, the series converges and we can write $\lim_{k\to\infty}M_k = \Big[\sum_{i=0}^{\infty}D_S^i\Big]D_T=(I_S -D_S)^{-1}D_T=M$. $\blacksquare$

\renewcommand{\thetheorem}{4.2}
\subsection{Proof of Theorem \ref{td-conv}}\label{td-proof}

\begin{theorem}
    Assume the learning rate $\alpha_t$ is such that $\sum_t^{\infty}\alpha_t =\infty$ and $\sum_t^{\infty}\alpha_t^2 <\infty$. The sample-based update rule
    \[
    \widehat M(s_t) \leftarrow (1-\alpha_t) \widehat M(s_t) + \alpha_t e^{\frac{r_t}{\lambda}} \widehat M^+(s_{t+1})
    \]
    will converge to $M$.
\end{theorem}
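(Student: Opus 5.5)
The plan is to cast the update as a stochastic approximation of the fixed-point operator from Theorem~\ref{dp-conv}, and then invoke the standard convergence result for asynchronous stochastic iterations of a contraction (Jaakkola, Jordan and Singh, 1994; Tsitsiklis, 1994; the same argument used for Q-learning and TD(0)). Define the operator $\mathcal{H}$ on $\mathbb{R}^{S\times T}$ row-wise by
\[
(\mathcal{H}\widehat M)(s)=e^{\cR(s)/\lambda}\sum_{s'\in\cS^+}\cP^{\mu}(s'\mid s)\widehat M^+(s'),
\]
where $\widehat M^+$ is the extension of $\widehat M$ that uses the fixed rows $\mathbb{1}_{s'=\tau_i}$ on terminal states. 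In matrix form this is $\mathcal{H}\widehat M = D_T + D_S\widehat M$. Its unique fixed point is $M$ by Theorem~\ref{dp-conv}. It is a $\|\cdot\|_\infty$-contraction with modulus $\gamma=\max_{s}e^{\cR(s)/\lambda}\sum_{s'\in\cS}\cP^\mu(s'\mid s)<1$, by exactly the bound $\|D_S\|_\infty<1$ from the proof of Theorem~\ref{dp-conv}. The terminal rows cancel in the difference $\mathcal{H}\widehat M-\mathcal{H}\widehat M'=D_S(\widehat M-\widehat M')$.

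I would rewrite the sampled update at $s_t$ as
\[
\widehat M(s_t)\leftarrow(1-\alpha_t)\widehat M(s_t)+\alpha_t\big[(\mathcal{H}\widehat M)(s_t)+w_t\big],
\qquad w_t=e^{r_t/\lambda}\widehat M^+(s_{t+1})-(\mathcal{H}\widehat M)(s_t),
\]
and leave all other rows unchanged, which corresponds to $\alpha_t(s)=0$ for $s\neq s_t$. Two conditions must then be checked.

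\textbf{(i) Zero-mean noise.} We need $\mathbb{E}[w_t\mid\mathcal{F}_t]=0$. This requires $\mathbb{E}[e^{r_t/\lambda}\widehat M^+(s_{t+1})]=e^{\cR(s_t)/\lambda}\sum_{s'}\cP^\mu(s'\mid s_t)\widehat M^+(s')$. I would assume, as the construction implicitly does, that $r_t$ is independent of $s_{t+1}$ given $s_t$, and either that rewards are deterministic or that the reward model is such that $\mathbb{E}[e^{r_t/\lambda}]=e^{\cR(s_t)/\lambda}$. This is the main obstacle. With only $\mathbb{E}[r_t]=\cR(s_t)$, Jensen's inequality introduces a bias, so the proof must state this assumption explicitly; the simplest choice is deterministic state rewards.

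\textbf{(ii) Bounded conditional variance.} We need $\mathbb{E}[\|w_t\|^2\mid\mathcal{F}_t]\le C(1+\|\widehat M\|_\infty^2)$. This follows because $e^{r_t/\lambda}$ is bounded (bounded rewards) and $\widehat M^+(s_{t+1})$ is bounded by $\max(1,\|\widehat M\|_\infty)$.

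Finally, the per-state step sizes must satisfy the Robbins--Monro conditions. This requires every $s\in\cS$ to be visited infinitely often, which I would assume; it holds, for example, when episodes restart from a distribution with full support, since $\mu$ is proper. Given this, the stated condition on $\alpha_t$ applies to each state's subsequence. The theorem on asynchronous stochastic approximation with a sup-norm contraction then yields $\widehat M\to M$ almost surely. A key advantage here is that the terminal rows of $\widehat M^+$ are fixed rather than learned, so no terminal updates interfere with the iteration.
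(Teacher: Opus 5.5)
Your proposal is correct and follows the paper's proof: the same operator (the paper's $F$ coincides with yours), the same rewriting of the update as $(F\widehat M)(s_t)+w_t$, the same $\|\cdot\|_\infty$-contraction bound (your modulus $\|D_S\|_\infty$ is slightly tighter than the paper's $\max_{s\in\cS}e^{\cR(s)/\lambda}$), and the same appeal to Tsitsiklis (1994) for asynchronous stochastic approximation. Your explicit check of the zero-mean noise condition is worth keeping, because the paper folds it into ``the same assumptions made by Tsitsiklis'' even though its setup only guarantees $\mathbb{E}[r_t]=\cR(s_t)$; under that assumption alone Jensen's inequality gives $\mathbb{E}[e^{r_t/\lambda}]\ge e^{\cR(s_t)/\lambda}$, with equality only for deterministic rewards.
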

\emph{Proof:} We can define the operator $F$ acting on rows of $M$ as
\[
(FM)(s) = e^{\frac{\cR(s)}{\lambda}}\sum_{s'\in \cS^+}\cP^{\mu}(s'\mid s)M^+(s').
\]

We also introduce the stochastic noise term $w_t(s_t) = e^{\frac{r_t}{\lambda}} \widehat M^+(s_{t+1}) - (F\widehat M)(s_t)$, which lets us reformat our update in stochastic approximation form:

\[
\widehat M(s_t) \leftarrow (1-\alpha_t)\widehat M(s_t) + \alpha_t ((F\widehat M)(s_t) + w_t(s_t)).
\]

We want to show that the operator $F$ is a contraction. We have

$[FM-FM'](s,\tau) = e^{\frac{\cR(s)}{\lambda}}\sum_{s'\in \cS^+}\cP^{\mu}(s'\mid s)[M^+-M'^+](s',\tau)$, and so applying the triangle inequality followed by bounding each term with the infinity norm we get

\begin{align*}
    |FM-FM'|(s,\tau) &= |e^{\frac{\cR(s)}{\lambda}}\sum_{s'\in \cS^+}\cP^{\mu}(s'\mid s)[M^+-M'^+](s',\tau)| \\
    &\le e^{\frac{\cR(s)}{\lambda}}\sum_{s'\in \cS^+}\cP^{\mu}(s'\mid s)|M^+-M'^+|(s',\tau) \\
    &\le e^{\frac{\cR(s)}{\lambda}}\sum_{s'\in \cS^+}\cP^{\mu}(s'\mid s)\|M^+-M'^+\|_{\infty} \\
    &=e^{\frac{\cR(s)}{\lambda}}\|M^+-M'^+\|_{\infty}\sum_{s'\in \cS^+}\cP^{\mu}(s'\mid s).\\
\end{align*}

The probability sum evaluates to $1$, so this expression becomes $e^{\frac{\cR(s)}{\lambda}}\|M^+-M'^+\|_{\infty}$. For $s\in\cS$, it is trivial that $M^+(s,\tau)=M(s,\tau)$. For $s\in\cT$, we have $M^+(s,\tau)-M'^+(s,\tau)=\mathbb{1}_{s=\tau}-\mathbb{1}_{s=\tau}=0$. Thus we can write

\begin{align*}
    \|FM-FM'\|_{\infty}&\le \max_{s\in\cS}e^{\frac{\cR(s)}{\lambda}}\|M^+-M'^+\|_{\infty} \\
    &= \max_{s\in\cS}e^{\frac{\cR(s)}{\lambda}}\|M-M'\|_{\infty} \\
    &= \gamma\|M-M'\|_{\infty}
\end{align*}

where $\gamma=\max_{s\in\cS}e^{\frac{\cR(s)}{\lambda}}<1$ follows from the fact that $\cR(s)<0$ for all $s\in\cS$. Thus $F$ is a contraction over $\|\cdot\|_{\infty}$. Introducing the same assumptions made by \citet{tsitsiklis1994asynchronous} for asynchronous stochastic approximation updates, which include the assumptions on the learning rate stated in the theorem, we can conclude that $\widehat M$ learned in this form converges to the true TR, $M$. $\blacksquare$

\subsection{Proof of Theorem \ref{equiv}}\label{equiv-proof}

\renewcommand{\thetheorem}{5.1}
\begin{theorem}
   Consider an LMDP of the form $\widetilde{\cL}=\left( \cS,\cT,\cA,\cP,\widetilde{\cR},\widetilde{\mu}\right)$, with \textit{transition} dependent reward $\widetilde\cR:\cS\times\cS^+\to\mathbb{R}$. We can always transform $\widetilde \cL$ to an equivalent LMDP
\[
  \cL=\left( \cS,\cT,\cA,\cP,\cR,\mu\right)
\]
with state-dependent rewards $\cR:\cS^+\to\mathbb{R}$, such that $\cL$ and $\widetilde{\cL}$ have the same optimal value function.
\end{theorem}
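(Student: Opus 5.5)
The plan is to write the Bellman optimality equation of $\widetilde\cL$ in exponentiated form and absorb the transition-dependent reward into a modified default kernel. The reward then collapses to a purely state-dependent term.

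\textbf{Step 1: the Bellman equation of $\widetilde\cL$.} The immediate reward is $\widetilde\cR(s,s') - \lambda\,\mathrm{KL}(\cP^\pi(\cdot\mid s)\Vert\cP^{\widetilde\mu}(\cdot\mid s))$. The optimality equation is therefore
\[
\tfrac1\lambda v(s)=\max_\pi \mathbb{E}_{s'\sim\cP^\pi(\cdot|s)}\Big[\tfrac1\lambda\widetilde\cR(s,s')+\tfrac1\lambda v(s')-\log\tfrac{\cP^\pi(s'|s)}{\cP^{\widetilde\mu}(s'|s)}\Big].
\]
Resolving the maximisation exactly as in the derivation of \eqref{eq:z} (a log-sum-exp / Gibbs variational argument) gives
\[
z(s)=\sum_{s'}\cP^{\widetilde\mu}(s'\mid s)\,e^{\widetilde\cR(s,s')/\lambda}z(s'),
\]
with $z(\tau)=e^{\widetilde\cR(\tau)/\lambda}$ at terminal states. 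Here I take the terminal reward to be a state quantity, since terminal states induce no transitions.

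\textbf{Step 2: the construction.} For each $s\in\cS$, set
\[
Z(s)=\sum_{s'}\cP^{\widetilde\mu}(s'\mid s)e^{\widetilde\cR(s,s')/\lambda},
\]
which is positive and finite. Define the new default kernel and state reward by
\[
\cP^\mu(s'\mid s)=\frac{\cP^{\widetilde\mu}(s'\mid s)e^{\widetilde\cR(s,s')/\lambda}}{Z(s)},\qquad \cR(s)=\lambda\log Z(s),
\]
and keep $\cR(\tau)$ equal to the terminal reward. The kernel $\cP^\mu$ has exactly the form $w(s')\cP^{\widetilde\mu}(s'\mid s)/Z$ covered by the realizability assumption, with $w(s')=e^{\widetilde\cR(s,s')/\lambda}$ depending also on $s$. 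So there is a policy $\mu$ inducing $\cP^\mu$, and $\cL$ is a legitimate LMDP over the same $\cS,\cT,\cA,\cP$.

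\textbf{Step 3: the value functions coincide.} Substituting the construction gives
\[
e^{\cR(s)/\lambda}\sum_{s'}\cP^\mu(s'\mid s)z(s') = Z(s)\sum_{s'}\frac{\cP^{\widetilde\mu}(s'|s)e^{\widetilde\cR(s,s')/\lambda}}{Z(s)}z(s'),
\]
which is exactly the right-hand side from Step 1. Hence the two LMDPs have identical Bellman equations and identical terminal boundary conditions. Both equations have the form $z=R_SP_Sz+R_SP_Ty$, and the solution is unique because $(I-D_S)$ is invertible under properness. So the optimal $z$ and $v=\lambda\log z$ agree.

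I would also note a check at the level of the optimisation objective. For any $\pi$,
\[
\mathbb{E}_\pi\big[\widetilde\cR(s,s')\big]-\lambda\mathrm{KL}(\cP^\pi\Vert\cP^{\widetilde\mu}) = \cR(s)-\lambda\mathrm{KL}(\cP^\pi\Vert\cP^\mu).
\]
This follows by expanding the log-ratio $\log\frac{\cP^\pi}{\cP^{\widetilde\mu}}=\log\frac{\cP^\pi}{\cP^{\mu}}+\frac{\widetilde\cR}{\lambda}-\log Z$. It shows the two problems agree policy-by-policy, not just at the optimum, and so the optimal controls coincide too.

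\textbf{Main obstacle.} The delicate point is that $\cR(s)=\lambda\log Z(s)$ need not be negative, whereas the framework assumes $\cR(s)<0$ on $\cS$. I would first argue that this is harmless: the value-equivalence argument does not use negativity, only existence and uniqueness of the solution. Alternatively, if $\widetilde\cR(s,s')<0$ for all $s\in\cS$, then $Z(s)<1$ and $\cR(s)<0$ holds automatically. I would state the theorem under that natural analogue of the sign assumption. A secondary check is realizability of $\cP^\mu$, which I would justify directly from the standing assumption as in Step 2.
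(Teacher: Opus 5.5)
Your proposal is correct and follows the paper's proof essentially step for step: the same reweighted default kernel $\cP^{\mu}(s'\mid s)\propto\cP^{\widetilde\mu}(s'\mid s)e^{\widetilde\cR(s,s')/\lambda}$ and state reward $\cR(s)=\lambda\log\sum_{s'}\cP^{\widetilde\mu}(s'\mid s)e^{\widetilde\cR(s,s')/\lambda}$, verified by substitution into the exponentiated Bellman equation, with $\cR(s)<0$ recovered under $\widetilde\cR<0$ exactly as the paper does; your state-by-state derivation of realizability from the standing assumption and your policy-by-policy KL identity are sound additions. One small correction: invertibility of $I-D_S$ comes from $\widetilde\cR<0$ (which gives $\|D_S\|_\infty<1$), not from properness alone when $\cR(s)$ may be positive, but this does not affect the equivalence because both LMDPs produce literally the same linear system.
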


\emph{Proof:}
For each $(s,s')\in \cS\times \cS^+$, define
\[
  x(s,s') = \log \cP^{\widetilde \mu}(s'\mid s) + \frac{\widetilde\cR(s,s')}{\lambda}.
\]
Note that the Bellman optimality equation can be written as
\[
  z(s) = \sum_{s'\in \cS^+} e^{x(s,s')}z(s'), \quad \forall s\in \cS.
\]

The idea is to choose $\mu$ and $\cR$ such that
\[
  \log\cP^{\mu}(s'\mid s)+\frac{\cR(s)}{\lambda}=x(s,s')
\]
for each $(s,s')\in \cS\times \cS^+$. To compute $\cR(s)$ we can write
\[
  \sum_{s'} e^{x(s,s')}
  = e^{\cR(s)/\lambda}\sum_{s'}\cP^{\mu}(s'\mid s)
  = e^{\cR(s)/\lambda}
  \iff
  \cR(s)=\lambda\log\sum_{s'}e^{x(s,s')}.
\]
In turn, this allows us to compute $\cP^{\mu}(s'\mid s)$ as
\begin{align*}
  &\log\cP^{\mu}(s'\mid s)
  = x(s,s') - \log\sum_{s''}e^{x(s,s'')}\\
  \iff \quad
   &\cP^{\mu}(s'\mid s)
  = \frac{e^{x(s,s')}}{\sum_{s''}e^{x(s,s'')}} = \frac{e^{\widetilde\cR(s,s')/\lambda} \cP^{\widetilde\mu}(s'\mid s)} {\sum_{s''} e^{\widetilde\cR(s,s'')/\lambda} \cP^{\widetilde\mu}(s''\mid s)}.
\end{align*}
Here we have to assume that there exists a policy $\mu$ that realizes the transition kernel $\cP^\mu$.

Under the assumption that $\cR(s,s')<0$ for each $(s,s')\in \cS\times \cS^+$, we obtain
\[
  \cR(s)
  = \lambda\log\sum_{s'}e^{x(s,s')}
  = \lambda\log\sum_{s'}\cP^{\widetilde\mu}(s'\mid s)e^{\widetilde\cR(s,s')/\lambda}
  < \lambda\log\sum_{s'}\cP^{\widetilde\mu}(s'\mid s)
  = 0.
\]
Hence $\cL$ satisfies $\cR(s)<0$ for each $s\in S$. $\blacksquare$

\subsection{Proof of Theorem \ref{sa-tr}}\label{sa-proof}
\renewcommand{\thetheorem}{5.2}
\begin{theorem}
    Consider an LMDP of the form $\bar{\cL}=(\cS,\cT,\cA,\cP,\bar{\cR},\mu)$, where $\bar \cR :\cS^+\times\cA\to\mathbb{R}$. Additionally, we define the state-action exponentiated values $\bar z(s,a)=e^\frac{q(s,a)}{\lambda}$ for $s\in\cS$ and $\bar z(\tau,a)=e^\frac{\bar \cR(\tau,a)}{\lambda}$ for $\tau\in\cT$. Let $\bar z \in \mathbb{R}^{S\times A}$ be the vector of exponentiated non-terminal action-values and $\bar y \in \mathbb{R}^{T\times A}$ its terminal equivalent. Then, we can compute the state-action TR $\bar M$ under this formulation and use it to recover action-values through the relation $\bar z = \bar M\bar y$.
\end{theorem}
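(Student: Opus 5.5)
The plan is to reduce the state-action LMDP $\bar\cL$ to an ordinary state-dependent LMDP on an augmented state space, and then read off $\bar M$ as that LMDP's TR, as in Section~4. The natural augmented "states" are the pairs $(s,a)$. The non-terminal set is $\cS\times\cA$ and the terminal set is $\cT\times\cA$, with the reward of pair $(s,a)$ equal to $\bar\cR(s,a)$. The default dynamics between pairs are
\[
\bar\cP^{\mu}\big((s',a')\mid (s,a)\big)=\cP(s'\mid s,a)\,\mu(a'\mid s').
\]
For a terminal $s'=\tau$, I will use the convention that the choice of $a'$ merely indexes which terminal reward $\bar\cR(\tau,a')$ is collected. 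Under this construction the action is committed at $(s,a)$, so the KL penalty applies to the choice of the next action $a'$ relative to $\mu(\cdot\mid s')$.

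\textbf{Step 1 (Bellman equation in $\bar z$).} First I write the soft-Bellman optimality equation for $q$:
\[
q(s,a)=\bar\cR(s,a)+\sum_{s'}\cP(s'\mid s,a)\,v(s').
\]
The next-state value is obtained by the same analytic KL maximization that produced \eqref{eq:z}, now over the next action:
\[
e^{v(s')/\lambda}=\sum_{a'}\mu(a'\mid s')\,\bar z(s',a').
\]
Exponentiating gives the linear system
\[
\bar z(s,a)=e^{\bar\cR(s,a)/\lambda}\sum_{s'\in\cS^+}\sum_{a'}\cP(s'\mid s,a)\,\mu(a'\mid s')\,\bar z(s',a').
\]
This has exactly the form of \eqref{eq:z}, with $R$ replaced by the $(SA)$-dimensional diagonal matrix $\bar R$ with entries $e^{\bar\cR(s,a)/\lambda}$, and $P$ replaced by the pair-transition matrix $\bar P$ above.

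\textbf{Step 2 (split and solve).} I split $\bar P$ into the block $\bar P_S$ (pairs in $\cS\times\cA$ to pairs in $\cS\times\cA$) and the block $\bar P_T$ (pairs in $\cS\times\cA$ to pairs in $\cT\times\cA$). Repeating the derivation in Section~4 gives
\[
\bar z=\bar R_S\bar P_S\bar z+\bar R_S\bar P_T\bar y .
\]
I then define
\[
\bar D_S=\bar R_S\bar P_S,\qquad \bar D_T=\bar R_S\bar P_T,\qquad \bar M=(I-\bar D_S)^{-1}\bar D_T,
\]
which yields $\bar z=\bar M\bar y$. Because $\bar M$ has the same algebraic form as $M$, the arguments of Theorems~\ref{dp-conv} and~\ref{td-conv} also show that $\bar M$ can be computed, by DP or from samples.

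\textbf{Main obstacle: invertibility of $I-\bar D_S$.} The identity only holds if $I-\bar D_S$ is invertible, which I will get from the Neumann series as in the proof of Theorem~\ref{dp-conv}. For each non-terminal pair, the row sum of $\bar P_S$ is at most $1$, since $\sum_{a'}\mu(a'\mid s')=1$ and the transition probabilities sum to at most one. So it suffices to assume $\bar\cR(s,a)<0$ for all $s\in\cS$, which is the natural analogue of the standing assumption. Then $\|\bar D_S\|_\infty<1$ and the series converges.

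The part needing most care is the terminal convention. The terminal pairs $(\tau,a')$ are weighted by $\mu(a'\mid\tau)$, and this weighting must be placed in $\bar P_T$ rather than in $\bar y$. Otherwise the product $\bar M\bar y$ would not reproduce $z(\tau)=\sum_{a'}\mu(a'\mid\tau)\,\bar z(\tau,a')$, and the relation $\bar z=\bar M\bar y$ would fail.
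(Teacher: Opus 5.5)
\textbf{Gap in Step 1.} Your overall structure matches the paper's: you lift to pairs with $\bar\cP^{\mu}((s',a')\mid(s,a))=\cP(s'\mid s,a)\mu(a'\mid s')$, split into $\bar P_S,\bar P_T$, and invert. Your Neumann-series argument for invertibility of $I-\bar D_S$ under $\bar\cR(s,a)<0$ also covers a point the paper leaves implicit. The problem is the step ``Exponentiating gives the linear system'', which is false in general. From $q(s,a)=\bar\cR(s,a)+\sum_{s'}\cP(s'\mid s,a)v(s')$, exponentiating gives
\[
\bar z(s,a)=e^{\bar\cR(s,a)/\lambda}\prod_{s'\in\cS^+}\big(e^{v(s')/\lambda}\big)^{\cP(s'\mid s,a)},
\]
which is a geometric mean of next-state values, not the arithmetic mean $\sum_{s'}\cP(s'\mid s,a)e^{v(s')/\lambda}$. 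By Jensen's inequality the two differ whenever $\cP(\cdot\mid s,a)$ is stochastic and $v$ is non-constant on its support. So under the semantics you explicitly adopt (action committed at $(s,a)$, KL penalty only on $a'$ relative to $\mu(\cdot\mid s')$), the Bellman equation is
\[
\bar z(s,a)=e^{\bar\cR(s,a)/\lambda}\exp\big(\textstyle\sum_{s'}\cP(s'\mid s,a)\log\sum_{a'}\mu(a'\mid s')\bar z(s',a')\big).
\]
This is nonlinear in $\bar z$, and $\bar z=\bar M\bar y$ does not follow from it.

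\textbf{How to repair it.} Your linear system arises only when the controlled object is the entire next-pair distribution over $(s',a')$, with KL penalty relative to $\bar\cP^{\mu}(\cdot\mid s,a)$. Maximising over that joint distribution gives $\log\sum_{s',a'}\bar\cP^{\mu}(s',a'\mid s,a)\bar z(s',a')$, a log of an arithmetic mean. Fixing the $s'$-marginal to $\cP(\cdot\mid s,a)$, as your semantics does, gives the expression above instead. The paper takes the first route. It treats the pair space $(\cS\times\cA,\cT\times\cA)$ as an LMDP in its own right, with passive dynamics $\bar\cP^{\mu}$ and state-dependent reward $\bar\cR$, implicitly carrying the realizability assumption over to the lifted space. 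Then \eqref{eq:z} applies verbatim and yields the linear equation directly, after which the partition and inversion proceed exactly as in your Step 2. You have two options: adopt that lifted-LMDP semantics and drop the claim that only $a'$ is controlled, or restrict to deterministic $\cP$. In the deterministic case your exponentiation step holds with equality and the two semantics coincide.
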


\emph{Proof:} The induced distribution $\bar \cP^{\mu}:\cS^+\times\cA\to\Delta(\cS^+\times\cA)$ is computed as $\bar \cP^{\mu}(s',a'\mid s,a) = \cP(s'\mid s, a) \cdot \mu(a'\mid s')$, with $\bar \cP^{\mu}(s',a'\mid \tau,a) = 0$ for all $\tau\in\cT$.

$\bar R_{SA} \in \mathbb{R}^{SA \times SA}$ and $\bar R_{TA} \in \mathbb{R}^{TA \times TA}$ are diagonal exponentiated reward matrices for non-terminal and terminal states respectively, while $\bar P_{SA} \in \mathbb{R}^{SA \times SA}$ and $\bar P_{TA} \in \mathbb{R}^{SA \times TA}$ encode transitions.

We can partition the Bellman equations as in the state-dependent case: For all $s\in\cS$ and $a\in\cA$,
\begin{align*}
\bar z(s,a) &= e^\frac{\bar \cR(s,a)}{\lambda}\sum_{s',a'\in \cS^+\times\cA}\bar\cP^{\mu}(s',a'\mid s,a) \bar z(s',a') \\
&= e^\frac{\bar\cR(s,a)}{\lambda}\sum_{s',a'\in \cS\times\cA}\bar\cP^{\mu}(s',a'\mid s,a) \bar z(s',a') + e^\frac{\bar\cR(s,a)}{\lambda}\sum_{\tau,a'\in \cT\times\cA}\bar\cP^{\mu}(\tau,a'\mid s,a) e^\frac{\bar\cR(\tau,a')}{\lambda}.
\end{align*}

This leads to
\[
  \bar z = \bar R_{SA} \bar P_{SA} \bar z + \bar R_{SA} \bar P_{TA} \bar y
  \iff
  \bar z = (\bar R_{SA}^{-1}-\bar P_{SA})^{-1}\bar P_{TA}\bar y =(I_{SA}-\bar R_{SA} \bar P_{SA})^{-1}\bar R_{SA}\bar P_{TA}\bar y.
\]

We introduce equivalent compact matrices to the state-dependent case: $\bar D_{SA}=\bar R_{SA} \bar P_{SA}$ and $\bar D_{TA}=\bar R_{SA} \bar P_{TA}$.

Now,
\begin{align*}
     \bar M &= (\bar R_{SA}^{-1}-\bar P_{SA})^{-1}\bar P_{TA}\\
    &= (I_{SA}-\bar R_{SA} \bar P_{SA})^{-1}\bar R_{SA}\bar P_{TA} \\
    &= (I_{SA}-\bar D_{SA})^{-1} \bar D_{TA}
\end{align*}

is the \textit{state-action} TR, and we can compute $\bar z = \bar M\bar y$. $\blacksquare$

\subsection{Proof of Theorem \ref{tr-comp}}\label{comp-proof}

\renewcommand{\thetheorem}{6.1}
\begin{theorem}
Assume the base LMDP reward functions follow the structure

$\mathcal{R}_i(\tau)=
\begin{cases}
0 & \text{if } \tau=\tau_i, \\
-\infty & \text{if }  \tau\ne\tau_i.
\end{cases}$

The base LMDP values can be used to perform composition by setting $w_i=z(\tau_i)$, such that $z(s) = \sum_{i=1}^T z(\tau_i)z_i(s)$ \citep{infante2022globally}.

Let $M$ be the TR matrix and $M^+:\mathcal{S}^+\times\mathcal{T}\to\mathbb{R}$ be its extended domain defined as:

$M^+(s,\tau_i) =
\begin{cases}
M(s,\tau_i) & \text{if } s \in \mathcal{S}, \\
\mathbb{1}_{s=\tau_i} & \text{if } s \in \mathcal{T}.
\end{cases}$

Under these conditions, the TR can be used to perform compositionality via the following relation:
$$z(s)=\sum_{i=1}^Tz(\tau_i)M^+(s,\tau_i).$$
\end{theorem}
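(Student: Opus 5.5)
The plan is to show that, under the stated base reward structure, each base value function $z_i$ restricted to $\cS$ equals the column $M_{\cdot,\tau_i}$ of the TR. Together with the trivial agreement on terminal states, this reduces the claimed relation to the composition identity $z(s)=\sum_{i=1}^T z(\tau_i)z_i(s)$ of \citet{infante2022globally}, which the statement takes as given.

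\textbf{Step 1: the terminal vector of each base LMDP.} With $\cR_i(\tau_i)=0$ and $\cR_i(\tau)=-\infty$ for $\tau\ne\tau_i$, the terminal vector of $\cL_i$ has entries $y_i(\tau)=e^{\cR_i(\tau)/\lambda}$. This gives $y_i(\tau_i)=e^0=1$ and $y_i(\tau)=e^{-\infty}=0$ otherwise, so $y_i=\mathbf e_{\tau_i}$, the one-hot vector of length $T$. On $\cT$ we therefore have $z_i(\tau)=\mathbb{1}_{\tau=\tau_i}=M^+(\tau,\tau_i)$, which is exactly the terminal branch of the definition of $M^+$.

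\textbf{Step 2: the non-terminal values of each base LMDP.} The base LMDPs share $\cS,\cT,\cP,\mu$ and the non-terminal rewards with $\cL$. Hence $R_S$, $P_S$, $P_T$, and therefore $D_S$, $D_T$ and $M=(I_S-D_S)^{-1}D_T$, are identical across all of them. Applying the zero-shot recovery identity $z=My$ from Section 4 to $\cL_i$ gives $z_i|_{\cS}=M\mathbf e_{\tau_i}=M_{\cdot,\tau_i}$. So for $s\in\cS$, $z_i(s)=M(s,\tau_i)=M^+(s,\tau_i)$. Combined with Step 1, $z_i(s)=M^+(s,\tau_i)$ holds for all $s\in\cS^+$.

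\textbf{Step 3: substitution.} Substituting into the composition identity gives $z(s)=\sum_i z(\tau_i)M^+(s,\tau_i)$ for $s\in\cS$. For terminal $s=\tau_j$, the right-hand side collapses to $z(\tau_j)$, so the relation holds on all of $\cS^+$.

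As a sanity check independent of \citet{infante2022globally}, the case $s\in\cS$ also follows directly. The terminal vector of $\cL$ is $y=\sum_i z(\tau_i)\mathbf e_{\tau_i}$, so linearity of $z=My$ gives $z=\sum_i z(\tau_i)M_{\cdot,\tau_i}$.

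\textbf{Main obstacle.} The delicate step is justifying $z_i=My_i$ when some terminal rewards are $-\infty$. This needs two observations:
\begin{itemize}
\item $M$ does not depend on terminal rewards at all. It involves only $R_S$, $P_S$, $P_T$, and $D_T=R_SP_T$ uses non-terminal rewards only.
\item The derivation of $z=My$ from the linear Bellman equation only uses $y$ as a nonnegative vector. That derivation goes through the Neumann series of Theorem~\ref{dp-conv}, which converges because the non-terminal rewards are negative.
\end{itemize}
Consequently, $y_i(\tau)=0$ can be plugged in directly, interpreting $e^{-\infty}=0$ or taking the limit of finite rewards tending to $-\infty$. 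This requires no assumption on the optimal-control kernel beyond what was used in deriving the TR.
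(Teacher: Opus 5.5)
Your proposal is correct and follows the paper's structure: identify $z_i$ with $M^+(\cdot,\tau_i)$, then substitute into the composition identity of \citet{infante2022globally}. The paper gets that identification from the element-wise recursion for $M^+$ plus uniqueness of the Bellman solution, whereas you read it off from $z=My$ with $y_i=\mathbf{e}_{\tau_i}$, which encodes the same uniqueness through invertibility of $I_S-D_S$; your linearity remark ($y=\sum_i z(\tau_i)\mathbf{e}_{\tau_i}$) is also valid and gives the relation on $\cS$ from $z=My$ alone, without the composition identity.
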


\emph{Proof:}

The TR matrix $M$ can be interpreted as a function $M:\mathcal{S}\times\mathcal{T}\to\mathbb{R}$. Then, for $s\in \mathcal{S}$ and $\tau_i\in\mathcal{T}$, our recursion $M={D}_T + {D}_S M$ can be written element-wise as

\begin{equation*}
    M(s,\tau_i) = e^{\frac{\mathcal{R}(s)}{\lambda}}(\mathcal{P}^{\mu}(\tau_i\mid s) + \sum_{s'\in\mathcal{S}} \mathcal{P}^{\mu}(s'\mid s) M(s',\tau_i)).
\end{equation*}

This is not quite the Bellman equation in $M$, because of the separation of $\mathcal{P}^{\mu}$ into $P_S$ and $P_T$.

Returning to the previous recursion and using the extended domain $M^+$ defined above, we can write:

$\begin{aligned}
M(s,\tau_i) &= e^{\frac{\mathcal{R}(s)}{\lambda}}(\mathcal{P}^{\mu}(\tau_i\mid s) + \sum_{s'\in\mathcal{S}} \mathcal{P}^{\mu}(s'\mid s) M(s',\tau_i)) \\
&= e^{\frac{\mathcal{R}(s)}{\lambda}}(\sum_{\tau'\in\mathcal{T}}\cP^{\mu}(\tau'\mid s)M^+(\tau',\tau_i) + \sum_{s'\in\mathcal{S}} \cP^{\mu}(s'\mid s) M^+(s',\tau_i)) \\
&= e^{\frac{\mathcal{R}(s)}{\lambda}} \sum_{s'\in\mathcal{S}^+} \cP^{\mu}(s'\mid s) M^+(s',\tau_i).
\end{aligned}$

This is not a valid recursion, since the LHS uses $M$ while the RHS uses $M^+$. For $s\in\mathcal{S}$ we have $M(s,\tau_i) = M^+(s,\tau_i)$, and so it is trivial to replace the LHS. This gives us the following expression for non-terminal states:

\begin{equation*}
    M^+(s,\tau_i)=e^{\frac{\mathcal{R}(s)}{\lambda}} \sum_{s'\in\mathcal{S}^+} \cP^{\mu}(s'\mid s) M^+(s',\tau_i).
\end{equation*}

We now examine the boundary conditions, i.e. states $\tau'\in\mathcal{T}$. By definition, we have $M^+(\tau',\tau_i)=\mathbb{1}_{\tau'=\tau_i}$.

$z_i(s)$, the exponentiated value function for base LMDP $i$, must satisfy the exponentiated Bellman equation for non-terminal states:
$$z_i(s) = e^{\frac{\mathcal{R}_i(s)}{\lambda}} \sum_{s'\in\mathcal{S}^+} \cP^{\mu}(s'\mid s) z_i(s').$$

Since the base LMDPs share the same non-terminal rewards as the overarching LMDP, $\mathcal{R}_i(s) = \mathcal{R}(s)$ for all $s \in \mathcal{S}$. Therefore, $z_i(s)$ obeys the same recursion as $M^+(s, \tau_i)$.

Under the assumed reward structure, we have that the terminal values $z_i(\tau)=e^{\cR_i(\tau)/\lambda}=\mathbb{1}_{\tau=\tau_i}$

Because $M^+(s, \tau_i)$ and $z_i(s)$ satisfy identical recursive equations for non-terminal states and share identical boundary conditions at terminal states, and since the Bellman equations have a unique solution, we can conclude that $M^+(s, \tau_i) = z_i(s)$ for all $s \in \mathcal{S}^+$ and $i=1\dots T$.

We have that $z(s) = \sum_{i=1}^T z(\tau_i)z_i(s)$. By direct substitution, we arrive at the final result:
$$z(s) = \sum_{i=1}^T z(\tau_i)M^+(s,\tau_i).$$ $\blacksquare$

\subsection{Proof of Theorem \ref{dr-eigenvec}}\label{eigenvec-proof}

\renewcommand{\thetheorem}{7.1}
\begin{theorem}
Let $\mathbf{\Zeta}$ be the DR over $\mathcal{S}^+$ and $
\mbf \Zeta_S$ its restriction to non-terminal states. Let $\tau^* = \mathrm{argmax}_{\tau\in\mathcal{T}}\mathcal{R}(\tau)$, $\bf e_{\tau^*}$ be a one-hot vector of length $T$ that encodes $\tau^*$, and $M_{\cdot, \tau^*}$ be the corresponding column vector of the TR $M$. Suppose $e^{\mathcal{R}(\tau^*)/\lambda} > \rho(\mathbf{\Zeta}_S)$, where $\rho(\cdot)$ denotes spectral radius. Then the top eigenvector of $\mathbf{\Zeta}$ is
$$\mbf v^* = \begin{bmatrix} \sum_{k=0}^{\infty} e^{-k\frac{\mathcal{R}(\tau^*)}{\lambda}} \mathbf{\Zeta}_{S}^k M_{\cdot, \tau^*} \\ \bf e_{\tau^*} \end{bmatrix}$$
with corresponding eigenvalue $\beta^* = e^{\mathcal{R}(\tau^*)/\lambda}$.
\end{theorem}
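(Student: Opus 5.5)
The plan is to write the DR $\mbf\Zeta$ in block form with respect to the partition $\cS^+=\cS\cup\cT$, and then verify the eigen-equation $\mbf\Zeta\mbf v^*=\beta^*\mbf v^*$ block by block. Ordering non-terminal states first, the convention that terminal states induce no transitions gives
\[
P=\begin{bmatrix} P_S & P_T\\ 0 & 0\end{bmatrix},
\qquad
R^{-1}-P=\begin{bmatrix} R_S^{-1}-P_S & -P_T\\ 0 & R_T^{-1}\end{bmatrix}.
\]
This matrix is block upper triangular. The top-left block is invertible by the Neumann series argument in the proof of Theorem~\ref{dp-conv}, and its inverse is $\mbf\Zeta_S$. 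Inverting blockwise should give
\[
\mbf\Zeta=\begin{bmatrix}\mbf\Zeta_S & \mbf\Zeta_S P_T R_T\\ 0 & R_T\end{bmatrix}
=\begin{bmatrix}\mbf\Zeta_S & M R_T\\ 0 & R_T\end{bmatrix},
\]
using $M=(R_S^{-1}-P_S)^{-1}P_T=\mbf\Zeta_S P_T$ from the definition of the TR. I will check this by multiplying the proposed inverse back against $R^{-1}-P$. This identity is the key step, since it is exactly where the TR enters the DR.

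With this form, I take $\mbf v=[x;\,\mathbf e_{\tau^*}]$.

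\textbf{Bottom block.} Since $R_T$ is diagonal, $R_T\mathbf e_{\tau^*}=e^{\cR(\tau^*)/\lambda}\mathbf e_{\tau^*}=\beta^*\mathbf e_{\tau^*}$, so the bottom block of the eigen-equation holds.

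\textbf{Top block.} The top block requires
\[
\mbf\Zeta_S x + M R_T\mathbf e_{\tau^*}=\beta^* x .
\]
Since $MR_T\mathbf e_{\tau^*}=\beta^* M_{\cdot,\tau^*}$, this becomes $(\beta^* I-\mbf\Zeta_S)x=\beta^* M_{\cdot,\tau^*}$. Rearranging,
\[
x=\left(I-\tfrac{1}{\beta^*}\mbf\Zeta_S\right)^{-1}M_{\cdot,\tau^*}.
\]
The hypothesis $\beta^*>\rho(\mbf\Zeta_S)$ gives $\rho(\mbf\Zeta_S/\beta^*)<1$. Hence the inverse exists and equals the convergent Neumann series $\sum_{k\ge0}(\beta^*)^{-k}\mbf\Zeta_S^k$, which is exactly the stated non-terminal component with $(\beta^*)^{-k}=e^{-k\cR(\tau^*)/\lambda}$.

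\textbf{Why this is the top eigenvector.} Because $\mbf\Zeta$ is block upper triangular, its spectrum is the union of the spectrum of $\mbf\Zeta_S$ and the diagonal entries $e^{\cR(\tau)/\lambda}$, $\tau\in\cT$. The largest of these diagonal entries is $\beta^*$ by the choice of $\tau^*$, since the exponential is monotone and $\lambda>0$. By assumption $\beta^*$ also exceeds $\rho(\mbf\Zeta_S)$. Therefore $\beta^*=\rho(\mbf\Zeta)$, and it is a real positive eigenvalue.

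\textbf{Main obstacle.} I expect the hard part to be making ``top eigenvector'' precise rather than any computation. If $\tau^*$ is the unique maximizer, then $\beta^*$ is a simple eigenvalue: it does not occur in the spectrum of $\mbf\Zeta_S$, and only one diagonal entry of $R_T$ equals it. In that case the eigenvector is unique up to scaling, and the normalization is fixed by the $\mathbf e_{\tau^*}$ component. If several terminal states tie for the maximum reward, the eigenspace is multi-dimensional, and $\mbf v^*$ is only one eigenvector in it. I would state the result under uniqueness of the argmax and remark on the tie case.
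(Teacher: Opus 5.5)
Your proposal is correct and follows essentially the same route as the paper: you write $R^{-1}-P$ in block upper-triangular form, invert it to get the top-right block $MR_T$, and solve the eigen-equation blockwise via the Neumann series for $(I-\beta^{*-1}\mathbf{\Zeta}_S)^{-1}$. Your justification of ``top'' through the spectrum of a block-triangular matrix is slightly cleaner than the paper's, and your caveat about ties in the $\mathrm{argmax}$ is more careful, since the paper passes over that case: with a tie, $\beta^*$ is no longer simple and $\mbf v^*$ is only one element of a larger eigenspace.
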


\emph{Proof:}
The full DR is given by the $S^+\times S^+$ matrix

\begin{align}
    \mbf \Zeta = \Bigl [R^{-1} - P \Bigr]^{-1}.
\end{align}

To analyse its eigenvectors, we start with the inverted component $\mbf \Zeta^{-1} = \Bigl [R^{-1} - P \Bigr]$. This can be partitioned into submatrices of the form $\mathcal{S}\times \mathcal{S}\to \mathbb{R}, \mathcal{T}\times \mathcal{T} \to \mathbb{R}, \mathcal{S}\times \mathcal{T} \to \mathbb{R}$ and $\mathcal{T}\times \mathcal{S} \to \mathbb{R}$.

The $\mathcal{S}\times \mathcal{S}$ part is just $\Bigl [R_S^{-1} - P_S \Bigr]$.

The $\mathcal{T}\times \mathcal{S}$ part is $\mbf 0_{TS}$, as we are only taking off-diagonal entries from the reward matrix and because there is $0$ probability of transitioning from $\tau\in\mathcal{T}$ to $s\in\mathcal{S}$.

The $\mathcal{S}\times \mathcal{T}$ part will also have off-diagonal 0 entries in the reward matrix, and the transition matrix becomes $P_T$. The full expression is thus $-P_T$.

Since $\mathcal{P}(s\mid\tau) = 0$ for all $s\in\mathcal{S^+}$, the $\mathcal{T}\times \mathcal{T}$ part is just a matrix over terminal rewards, $R_T^{-1}$.

Putting this together, we can say that

$$\mbf \Zeta^{-1} = \begin{bmatrix} (R_S^{-1} - P_S) & -P_T \\ \mbf 0_{TS} & R_T^{-1} \end{bmatrix}.$$

We can invert a matrix of this form \citep{horn2012matrix} to get

\begin{align}
\mbf \Zeta
&= \begin{bmatrix}
(R_S^{-1} - P_S)^{-1} & (R_S^{-1} - P_S)^{-1} P_T R_T \\
\mathbf{0}_{TS} & R_T
\end{bmatrix} \\
&= \begin{bmatrix}
\mbf \Zeta_{S} & M R_T \\
\mathbf{0}_{TS} & R_T
\end{bmatrix}.
\end{align}

Let $\mbf v^{(i)}$ and $\beta_i$ denote respectively the $i^{\text{th}}$ eigenvector of $\mbf \Zeta$ and its corresponding eigenvalue, where the ordering is done by eigenvalue magnitude. Each $\mbf v^{(i)}$ is of length $S^+$, and so we can decompose them into their non-terminal and terminal components: $\mbf v^{(i)} = \left[\mbf v^{(i)}_S \space\mbf v^{(i)}_T\right]$. Then we can write

$\begin{bmatrix} \mathbf{\Zeta}_{S} & M R_T \\ \mathbf{0}_{TS} & R_T \end{bmatrix} \begin{bmatrix} \mbf v^{(i)}_S \\ \mbf v^{(i)}_T \end{bmatrix} = \beta_i \begin{bmatrix} \mbf v^{(i)}_S \\ \mbf v^{(i)}_T \end{bmatrix}$

which leads to
$\mathbf{\Zeta}_{S} \mbf v^{(i)}_S + M R_T \mbf v^{(i)}_T = \beta_i \mbf v^{(i)}_S$ and $R_T \mbf v^{(i)}_T = \beta_i \mbf v^{(i)}_T$, from which we can conclude that $\mbf v^{(i)}_S = \beta_i (\beta_i I - \mathbf{\Zeta}_{S})^{-1} M \mbf v^{(i)}_T$.

So,
\begin{align*}
\mbf v^{(i)}&=\begin{bmatrix} (I - \beta_i^{-1}\mathbf{\Zeta}_{S})^{-1} M \mbf v^{(i)}_T \\ \mbf v^{(i)}_T \end{bmatrix}. \\
\end{align*}

$R_T$ is a diagonal matrix of exponentiated terminal rewards, thus $R_T \mbf v^{(i)}_T = \beta_i \mbf v^{(i)}_T$ tells us $\mbf v^{(i)}_T=e_{\tau}$ (one-hot vector over terminal states) and $\beta_i = e^{\frac{\mathcal{R}(\tau)}{\lambda}}$ for some $\tau\in \mathcal{T}$. If terminal rewards are constant, we have the same $\beta_i$ for all eigenvectors of $R_T$. Eigenvalues arising instead from $\mathbf{\Zeta}_S$ must have $\mbf v^{(i)}_T = \mbf 0$, since under our assumption no eigenvalue of $\mathbf{\Zeta}_S$ is also an eigenvalue of $R_T$, and so the only solution to $R_T \mbf v^{(i)}_T = \beta_i \mbf v^{(i)}_T$ in this case is the zero vector.

Under the assumption $e^{\mathcal{R}(\tau^*)/\lambda} > \rho(\mathbf{\Zeta}_S)$, the largest eigenvalue of the full DR is $\beta^* = e^{\mathcal{R}(\tau^*)/\lambda}$, with $\mbf v^*_T = e_{\tau^*}$. This same assumption guarantees convergence of the Neumann series
\begin{align*}
(I - \beta^{*-1}\mathbf{\Zeta}_{S})^{-1} = \sum_{k=0}^{\infty} \beta^{*-k} \mathbf{\Zeta}_{S}^k
\end{align*}

allowing us to evaluate the top eigenvector of the full DR as

\begin{align*}
\mbf v^*&=\begin{bmatrix} \mbf v^*_S \\ \mbf v^*_T \end{bmatrix} \\
&= \begin{bmatrix} e^{\frac{\mathcal{R}(\tau^*)}{\lambda}} (e^{\frac{\mathcal{R}(\tau^*)}{\lambda}} I- \mathbf{\Zeta}_{S})^{-1}M e_{\tau^*} \\ e_{\tau^*} \end{bmatrix} \\
&= \begin{bmatrix}   (I- e^{-\frac{\mathcal{R}(\tau^*)}{\lambda}}\mathbf{\Zeta}_{S})^{-1}M_{\cdot, \tau^*} \\ e_{\tau^*} \end{bmatrix} \\
&= \begin{bmatrix} \sum_{k=0}^{\infty} e^{-k\frac{\mathcal{R}(\tau^*)}{\lambda}} \mathbf{\Zeta}_{S}^k M_{\cdot, \tau^*} \\ e_{\tau^*}\end{bmatrix}.
\end{align*} $\blacksquare$

\textbf{Further analysis:}

Expanding the series gives

$$\mbf v^* = \begin{bmatrix}  M_{\cdot,\tau^*} + e^{-\frac{\mathcal{R}(\tau^*)}{\lambda}} \mathbf{\Zeta}_{S} M_{\cdot,\tau^*} + e^{-2\frac{\mathcal{R}(\tau^*)}{\lambda}} \mathbf{\Zeta}_{S}^2 M_{\cdot,\tau^*} + \dots  \\ e_{\tau^*}\end{bmatrix}.$$

In this expression, $\mbf v^*_S$ evaluates to a vector of length $S$, where each entry gives us a measure relating the current non-terminal state to the highest reward terminal state. $\mbf v^*_T$ identifies the highest reward terminal state among the index of all terminal states. If we have $\cR(\tau^*)>0$, then the exponentiated reward components in $\mbf v^*_S$ are decaying with index $k$, and so we can interpret this as a discount factor. For applications like reward shaping, the entries of $\mbf v^*$ at particular states are used. Clearly for $\tau\in\mathcal{T}, \mbf v^* (\tau) = \mathbb{1}_{\tau=\tau^*}$. For $s\in\mathcal{S}$, we have $$\mathbf{v}^*(s) = \sum_{k=0}^{\infty} e^{-k\frac{\mathcal{R}(\tau^*)}{\lambda}} \sum_{s' \in \mathcal{S}} \mathbf{\Zeta}_{S}^k(s, s') M(s', \tau^*).$$

Recall the definitions $\mbf \Zeta_{S} = (R_S^{-1} - P_S)^{-1}, M = (R_S^{-1} - P_S)^{-1}P_T$. Then,
\begin{align*}
    \sum_{k=0}^{\infty} e^{-k\frac{\mathcal{R}(\tau^*)}{\lambda}} \mathbf{\Zeta}_{S}^k M_{\cdot, \tau^*} &= \sum_{k=0}^{\infty} e^{-k\frac{\mathcal{R}(\tau^*)}{\lambda}} (R_S^{-1} - P_S)^{-(k+1)}P_{\tau^*} \\
    &= \bigg(\sum_{k=0}^{\infty} e^{-k\frac{\mathcal{R}(\tau^*)}{\lambda}} (R_S^{-1} - P_S)^{-(k+1)}\bigg)P_{\tau^*}.
\end{align*}

This can be interpreted as a discounted reward-aware transition term over trajectories of different lengths, weighted by transitions to the most desirable terminal state. $\blacksquare$

\section{Additional Experimental Details}
Experiments are adapted from the work of \citet{tse2025reward}. Details regarding the relevant settings, as well as hyperparameter configurations relating to the DR and SR, can be found in Appendix C of their paper. The code they used is publicly available at \url{https://github.com/httse9/Reward-Aware-Proto-Representations}, and we use this as a basis to implement our TR experiments. An independent repository for this paper will be made available at a later stage. \citet{tse2025reward} themselves adapt their Count-Based Exploration experiments and implementation from \citet{machado2020count}, with code for this work found at \url{https://github.com/mcmachado/count_based_exploration_sr}.

Experimental settings and hyperparameters are thus used as reported for the SR, the DR, and any 'neutral' baselines---e.g. the random walk in option discovery, SARSA without exploration bonus in count based exploration, learning with no shaped reward in reward shaping. We thus focus on details directly relevant to the TR experiments within this framework. Number of seeds used to generate confidence intervals are mentioned in the plot captions of the main text. Where this is fewer than what is reported by \citet{tse2025reward} (or \citet{machado2020count}), all experiments are run independently from scratch over the same seeds and numbers of seeds. Note that in general, though we use reported hyperparameters, experiments are run from scratch and we do not use stored results. Across the experiments, we do not perform comparisons under unequal conditions. It was necessary to modify the setting of one experiment in order to accommodate the TR---option discovery. We discuss this in the following subsection.

\subsection{Option discovery}
As acknowledged in the main text, the core limitation of the TR is its dependence on explicitly defined terminal states. \citet{tse2025reward} conduct their option discovery experiments in environments without terminal states, which naturally prevents usage of the TR. Our option
discovery experiment is conducted in a Four Rooms environment with four goal states, each of which inducing termination when reached. The environment layout and goal placement is identical to the multi-goal Four Rooms used by \citet{tse2025reward}, but the original version treats these goals as non-terminal. We make another variation to this environment in that the goal rewards are differently weighted---+4 in the top left, +3 in the bottom left, +2 in the bottom right, and +1 in the top right. The original version uses a reward of 0 for all goals. This change is to induce meaningful difference across the four TR variants.

The hyperparameter search space for option discovery follows Table 2 of \citet{tse2025reward}, and is reproduced in Table~\ref{tab:od_search}. Best hyperparameters are selected by maximising a composite score of normalised state visitation percentage and normalised average reward,
evaluated over the first 1500 steps across a minimum of 5 independent seeds. The Q-learning hyperparameter search sweeps over initialisation values $Q_\text{init} \in \{-10, 0\}$, exploration rates $\varepsilon \in \{0.15, 0.2\}$, and step sizes $\in \{0.01, 0.3, 1.0\}$. The same combination proved optimal across all methods. Best hyperparameters found for each method are reported in Tables~\ref{tab:od_best}
and~\ref{tab:od_qbest}.

\begin{table}[H]
\centering
\caption{Hyperparameter search space for option discovery.}
\label{tab:od_search}
\begin{tabular}{llr}
\toprule
\textbf{Name} & \textbf{Description} & \textbf{Values} \\
\midrule
$p_\text{option}$   & Probability of selecting an option                      & $[0.01, 0.05, 0.1]$ \\
$N_\text{learn}$    & Number of sweeps through dataset to learn the representation  & $[1, 10, 100]$ \\
$\alpha$            & Step size for learning the representation               & $[0.01, 0.03, 0.1]$ \\
$N_\text{option}$   & Number of latest options to keep                        & $[1, 8, 1000]$ \\
\bottomrule
\end{tabular}
\end{table}

\begin{table}[H]
\centering
\caption{Best hyperparameters for all methods on \texttt{fourrooms\_multigoal}.}
\label{tab:od_best}
\begin{tabular}{lcccc}
\toprule
\textbf{Method} & $p_\text{option}$ & $N_\text{learn}$ & $\alpha$ & $N_\text{option}$ \\
\midrule
CEO (SR)        & $0.1$  & $1$  & $0.01$ & $8$ \\
RACE (DR)       & $0.05$ & $1$  & $0.03$ & $1$ \\
TR-Cyclical     & $0.05$ & $1$  & $0.01$ & $8$ \\
TR-Random       & $0.1$  & $10$ & $0.03$ & $1$ \\
TR-Highest      & $0.1$  & $1$  & $0.01$ & $1$ \\
TR-All          & $0.1$  & $1$  & $0.01$ & $8$ \\
\bottomrule
\end{tabular}
\end{table}

\begin{table}[H]
\centering
\caption{Best Q-learning hyperparameters on \texttt{fourrooms\_multigoal}.}
\label{tab:od_qbest}
\begin{tabular}{lccc}
\toprule
\textbf{Method} & $Q_\text{init}$ & $\varepsilon$ & Step size \\
\midrule
Pure Q          & $-10$ & $0.2$ & $0.3$ \\
CEO+Q (SR)      & $-10$ & $0.2$ & $0.3$ \\
RACE+Q (DR)     & $-10$ & $0.2$ & $0.3$ \\
TR-Cyclical+Q   & $-10$ & $0.2$ & $0.3$ \\
TR-Random+Q     & $-10$ & $0.2$ & $0.3$ \\
TR-Highest+Q    & $-10$ & $0.2$ & $0.3$ \\
TR-All+Q        & $-10$ & $0.2$ & $0.3$ \\
\bottomrule
\end{tabular}
\end{table}

\subsection{Count based exploration}

Hyperparameters for Sarsa, Sarsa+SR, and Sarsa+DR are used as reported by \citet{machado2020count} and \citet{tse2025reward} respectively, and are reproduced in Tables~\ref{tab:cbe_sarsa}--\ref{tab:cbe_dr}. For Sarsa+TR, we perform a random search over 80 configurations sampled from the space given in Table~\ref{tab:cbe_search}. Best hyperparameters are reported in Table~\ref{tab:cbe_tr}.

\begin{table}[H]
\centering
\caption{Hyperparameters for Sarsa, reproduced from \citet{machado2020count}.}
\label{tab:cbe_sarsa}
\begin{tabular}{lccc}
\toprule
\textbf{Environment} & $\alpha$ & $\varepsilon$ & $\gamma$ \\
\midrule
\textsc{RiverSwim} & $0.37$ & $0.12$ & $0.95$ \\
\textsc{SixArms}   & $0.43$ & $0.01$ & $0.95$ \\
\bottomrule
\end{tabular}
\end{table}

\begin{table}[H]
\centering
\caption{Hyperparameters for Sarsa+SR, reproduced from \citet{machado2020count}.}
\label{tab:cbe_sr}
\begin{tabular}{lcccc}
\toprule
\textbf{Environment} & $\alpha$ & $\eta$ & $\beta$ & $\gamma_\phi$ \\
\midrule
\textsc{RiverSwim} & $0.1$ & $0.5$  & $10000$ & $0.5$ \\
\textsc{SixArms}   & $0.5$ & $0.25$ & $10000$ & $0.5$ \\
\bottomrule
\end{tabular}
\end{table}

\begin{table}[H]
\centering
\caption{Hyperparameters for Sarsa+DR, reproduced from \citet{tse2025reward}.}
\label{tab:cbe_dr}
\begin{tabular}{lccccc}
\toprule
\textbf{Environment} & $\alpha$ & $\eta$ & $\beta$ & $\lambda$ & transform \\
\midrule
\textsc{RiverSwim} & $0.5$ & $0.25$ & $100$ & $1.0$ & $\log(\|\mathbf{x}\|_2)$ \\
\textsc{SixArms}   & $0.5$ & $0.01$ & $0.1$ & $1.5$ & $\log(\|\mathbf{x}\|_2)$ \\
\bottomrule
\end{tabular}
\end{table}

\begin{table}[H]
\centering
\caption{Hyperparameter search space for Sarsa+TR.}
\label{tab:cbe_search}
\begin{tabular}{llr}
\toprule
\textbf{Name} & \textbf{Description} & \textbf{Values} \\
\midrule
$\alpha$      & Step size for Q-value updates        & $[0.01, 0.1, 0.25, 0.5]$ \\
$\eta$        & Step size for TR updates             & $[0.01, 0.1, 0.25, 0.5]$ \\
$\beta$       & Intrinsic reward scaling factor      & $[0.1, 1, 10, 100, 1000, 10000]$ \\
$\lambda$     & Exponential weighting of cumulant    & $[1.0, 1.5, 2.0]$ \\
$\gamma$      & TR discount factor                   & $[0.95, 1.0]$ \\
transform     & Transformation applied to TR norm    & $[\log\|\cdot\|_1,\ \log\|\cdot\|_2]$ \\
\bottomrule
\end{tabular}
\end{table}

\begin{table}[H]
\centering
\caption{Best hyperparameters for Sarsa+TR.}
\label{tab:cbe_tr}
\begin{tabular}{lcccccc}
\toprule
\textbf{Environment} & $\alpha$ & $\eta$ & $\beta$ & $\lambda$ & $\gamma$ & transform \\
\midrule
\textsc{RiverSwim} & $0.25$ & $0.01$ & $10$ & $2.0$ & $0.95$ & $\log(\|\mathbf{x}\|_2)$ \\
\textsc{SixArms}   & $0.01$ & $0.5$  & $10$ & $1.5$ & $0.95$ & $\log(\|\mathbf{x}\|_2)$ \\
\bottomrule
\end{tabular}
\end{table}

\section{Statistical significance}
We report 95\% confidence intervals throughout. These capture randomness across independent runs, where the primary source of randomness is random exploration via $\varepsilon$-greedy
policies. In option discovery and transfer experiments, randomness also arises from the stochastic environment dynamics across seeds. Following \citet{tse2025reward}, we compute 95\% confidence intervals as $\pm 1.96\frac{\hat{\sigma}}{\sqrt{N}}$, where $\hat{\sigma}$ is the sample standard deviation. For experiments conducted over $N \geq 50$ seeds this approximation is well-justified; for smaller $N$ (as in option discovery, where $N = 10$) it should be interpreted as approximate.

\section{Compute resources}

All experiments were conducted on a standard laptop CPU. Run times are modest, but can become more prohibitive with higher number of seeds and parameter configurations. This is especially the case for option discovery experiments.


\end{document}